\documentclass{article}
    \PassOptionsToPackage{numbers, compress}{natbib}
\usepackage[preprint]{neurips_2026} 
\usepackage{graphicx} 
\usepackage{import}
\usepackage{amsmath} 
\usepackage{amssymb}
\usepackage{amsthm}
\usepackage{xcolor}
\usepackage{natbib}
\usepackage{multirow,makecell}
\usepackage{booktabs}
\usepackage{caption}
\usepackage{subcaption}
\usepackage{wrapfig}
\usepackage{comment}
\usepackage{adjustbox}
\usepackage{hyperref}
\usepackage{float}

\newtheorem{definition}{Definition}

\newtheorem{theorem}{Theorem}
\newtheorem{proposition}{Proposition}

\newcommand{\ie}{\textit{i.e.,}}
\newcommand{\theo}{Theorem} 

\newcommand{\fig}{Fig.}
\newcommand{\tab}{Tab.}
\newcommand{\s}{Sec.}

\newcommand{\unlearning}{MU}

\title{Erased, but Not Gone: \\ Output Forgetting Is Not True Forgetting}
\author{%
  \textbf{Teresa Pui Yee Yong$^{1}$, \quad
  Win Kent Ong$^{1}$, \quad
  Chee Seng Chan$^{1,2}$}\thanks{Corresponding author: \url{cs.chan@um.edu.my}}\\
  $^{1}$ Universiti Malaya, Kuala Lumpur, Malaysia \\
  $^{2}$ VinUniversity, Hanoi, Vietnam\\
}

\date{April 2026}

\begin{document}

\maketitle

\begin{abstract}
Machine unlearning (\unlearning) is commonly judged by \emph{output forgetting}, such as low forget-set accuracy or reduced logit-level membership inference. But if output-level success can coexist with retraining-inconsistent residuals in representation space, what kind of forgetting are current evaluations actually certifying? We study this question through \emph{retraining-consistent representation forgetting}, using the retrained model (\ie~trained from scratch without the forget data) as an operational reference for correct forgetting. Across multiple unlearning methods, datasets, and models, our theoretical analysis and empirical results show that standard output-level evaluation can systematically overestimate the success of unlearning. Under this stronger lens, current methods often appear forgotten at the output layer while exhibiting a structured mismatch relative to retraining. They partially align with retraining on forget samples, remain more inconsistent on retain samples, and leave residual discrepancy concentrated along retraining-related directions rather than diffuse in representation space. This structured mismatch is characterized by forget/retain asymmetry, directional mismatch, and concentrated residuals along retraining-related directions. These results suggest that current \unlearning~is often evaluated for \emph{apparent forgetting} rather than retraining-consistent forgetting. More broadly, retraining reveals what output forgetting hides.
\end{abstract}


\section{Introduction}
\label{sec:introduction}

Machine unlearning (\unlearning) is commonly evaluated through \emph{output forgetting}~\cite{kurmanji2023towards,chen2023boundary,tarun2023fast,graves2021amnesiac,foster2024fast}. That is, if a model attains low forget-set accuracy, low output-level membership inference~\cite{carlini2022membership}, and high retain accuracy, it is typically regarded as having forgotten successfully. This has become the default success signal in much of the \unlearning~literature~\cite{chen2023boundary,graves2021amnesiac,foster2024fast,cao2015towards,bourtoule2021machine,xu2024machine,ginart2019making,chundawat2023can,thudi2022unrolling,ong2025maverick}. In this paper, we show through both theoretical analysis and extensive experimental results that this signal is too weak. Across multiple unlearning methods, datasets, and models, methods that appear successful at the output layer often remain inconsistent with retraining in representation space. In other words, current evaluation can systematically mistake \emph{apparent forgetting} for successful forgetting.

To expose this gap, we use \textbf{retraining-consistent representation forgetting} as a stronger evaluative lens. Concretely, we use the retrained model, \ie~the model trained from scratch without the forget data, as the operational reference for correct forgetting in representation space. This reference is important because comparing an unlearned model to the original reveals how much it has changed, whereas comparing it to retraining reveals whether it has changed in a way consistent with forgetting the designated data. Under this lens, we find that many existing unlearning methods~\cite{kurmanji2023towards,chen2023boundary,tarun2023fast,graves2021amnesiac,foster2024fast,le2025pour} achieve apparent output-level forgetting while remaining retraining-inconsistent in representation space (see \fig~\ref{fig:teaser}). This shows that output-level success is not sufficient to certify retraining-consistent forgetting. We emphasize that retraining-consistent representation forgetting is not introduced as a universal axiom for all \unlearning~settings. Rather, it serves here as a stronger evaluative lens for diagnosing failure modes that weaker endpoint-based criteria can miss.

\begin{wrapfigure}{l}{0.5\linewidth}
\vspace{-13pt}
\centering
\includegraphics[width=\linewidth]{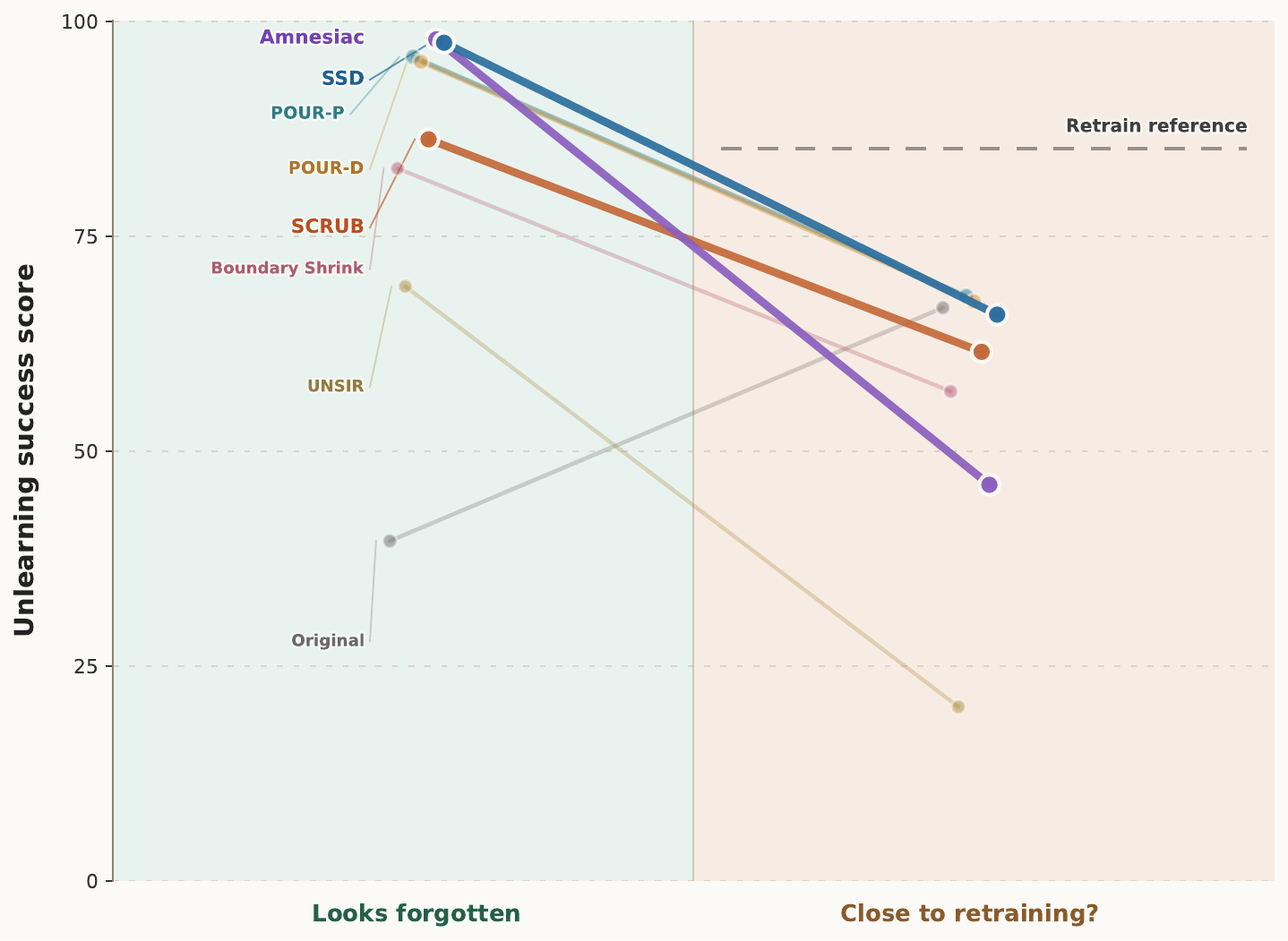}
\caption{\textbf{Looks forgotten, but not close to retraining.}
A schematic summary of the evaluation gap studied in this paper. 
Output-level metrics may suggest successful forgetting, yet the same methods can remain far from the exact retraining reference in representation space. This discrepancy motivates our retraining-consistent analysis of forget/retain asymmetry, directional mismatch, and concentrated residuals.}
\label{fig:teaser}
\vspace{-10pt}
\end{wrapfigure}

This raises the central question of the paper, \ie~\emph{\textbf{if output-level success can coexist with retraining-inconsistent residuals in representation space, then what kind of forgetting is current evaluation actually certifying?}}  We further ask whether such residual mismatch is merely incidental, or whether it follows a stable structure relative to exact retraining.

As illustrated in \fig~\ref{fig:teaser}, our claim is not simply that representation-level residuals can remain after unlearning.  Rather, under a retraining-consistent lens, these residuals reveal a systematic gap between what current evaluations certify and what exact retraining would produce.  Across the evaluated methods, this gap is not random.  Unlearning often moves in partial agreement with retraining on forget samples, deviates more substantially on retain samples, and leaves residual discrepancy concentrated along retraining-related directions.  In particular, the mismatch manifests through forget/retain asymmetry, directional mismatch, and concentrated residuals in representation space.  These findings suggest that many current methods~\cite{foster2024fast,bourtoule2021machine,xu2024machine,ginart2019making,chundawat2023can,thudi2022unrolling,golatkar2020eternal,golatkar2020forgetting} are better understood as achieving \emph{apparent forgetting} at the output level while leaving behind structured representation-level residuals.



We instantiate this analysis in controlled class-unlearning settings, starting with CIFAR-10 with ResNet-18 and extending across dataset complexity, model size, architecture, the forget class, and seed. Across these settings, the same structured mismatch persists beyond a single benchmark configuration. Taken together, these results identify a hidden failure mode in current \unlearning~evaluation and algorithms. The field often treats output-level forgetting as evidence of successful unlearning, even when retraining-consistent forgetting has not been achieved in representation space.

Our contributions are threefold:
\begin{itemize}
    \item We show that \emph{output forgetting} is a misleading success signal for \unlearning, where standard output-level evaluations can systematically overestimate successful forgetting.
    \item We introduce \emph{retraining-consistent representation forgetting} as a stronger evaluative lens, using the retrained model as the operational reference for what correct forgetting should look like in representation space.
    \item Under this lens, we uncover a \emph{structured failure mode} of current unlearning methods, characterized by forget/retain asymmetry, directional mismatch, concentrated residual leakage, and persistence across scales.
\end{itemize}

In this sense, the paper’s central message is simple. Retraining reveals what output forgetting hides.
\section{Problem Formulation and Retraining-Consistent Representation Lens}
\label{sec:preliminaries}
The empirical contradiction motivating this paper is simple. Output-level success does not guarantee consistency with retraining in representation space. The natural next question is therefore not only whether mismatch remains, but how to formalize that mismatch relative to retraining. This section introduces the minimal language needed to pose that question precisely.

\begin{figure}[t]
    \centering
    \includegraphics[width=\linewidth]{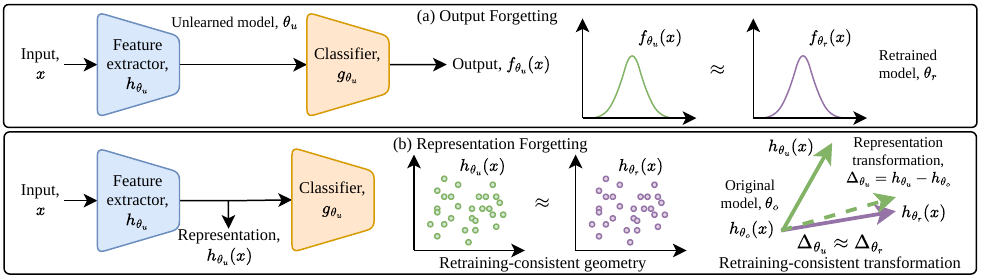}
    \caption{Output forgetting can appear successful while forget set information remains recoverable in representation space. We study this gap through a retraining-consistent representation lens.}
    \label{fig:representation_forgetting}
    \vspace{-10pt}
\end{figure}

\subsection{Preliminaries}
Let $\theta$ denote the initial model parameters, and let $\mathcal{D}=\{(x_i,y_i)\}_{i=1}^N$ be the full training dataset. A learning algorithm $\mathcal{A}$ produces the original model $\theta_o=\mathcal{A}(\theta,\mathcal{D})$. Given a designated forget set $\mathcal{D}_u\subset\mathcal{D}$, the remaining data form the retain set $\mathcal{D}_r=\mathcal{D}\setminus\mathcal{D}_u$. An unlearning algorithm $\mathcal{U}$ produces an unlearned model $\theta_u=\mathcal{U}(\theta_o,\mathcal{D}_u)$ with the goal of removing the influence of $\mathcal{D}_u$ from $\theta_o$. The ideal reference is the retrained model $\theta_r=\mathcal{A}(\theta,\mathcal{D}_r)$, trained from scratch on the $\mathcal{D}_r$ only.

We write the model as $f_\theta(x)=g_\theta(h_\theta(x))$, where $h_\theta:\mathcal{X}\to\mathbb{R}^d$ denotes the penultimate-layer representation and $g_\theta$ the prediction head. Throughout the paper, $\theta_r$ serves as the \emph{operational reference} for correct forgetting. Comparing $\theta_u$ to $\theta_o$ measures the extent of change, whereas comparing $\theta_u$ to $\theta_r$ assesses whether this change is consistent with retraining without $\mathcal{D}_u$.

\subsection{Two Lenses on Machine Unlearning}

Most~\unlearning~evaluations~\cite{chen2023boundary,graves2021amnesiac,foster2024fast,cao2015towards,bourtoule2021machine,xu2024machine,ginart2019making,chundawat2023can,thudi2022unrolling,ong2025maverick} are still dominated by \emph{output-level success signals}, such as low forget-set accuracy, low output-level membership inference, and high retain accuracy. These metrics assess whether a model \emph{appears} to forget at the prediction layer.

\begin{definition}[Output Forgetting]
\label{def:output_forgetting}
For a set $S$, let $\mathcal{F}_\theta^S=\{f_\theta(x):x\in S\}$ denote the model outputs with empirical distribution $\mathbb{P}_\theta^S(\mathcal{F})$. We say that an unlearned model $\theta_u$ achieves \emph{output forgetting} if its output behavior matches that of the retrained model $\theta_r$ on both the forget set $\mathcal{D}_u$ and the retain set $\mathcal{D}_r$, \ie~ for each $S \in \{ \mathcal{D}_u, \mathcal{D}_r\}$:
\begin{equation}
\mathbb{P}_{\theta_u}^S(\mathcal{F}) \approx \mathbb{P}_{\theta_r}^S(\mathcal{F}), 
\quad \forall S \in \{\mathcal{D}_u, \mathcal{D}_r\}.
\end{equation}
\end{definition}
However, output forgetting alone does not establish whether the model is also consistent with retraining in representation space. To formalize this gap, we use \emph{retraining-consistent representation forgetting} as a stronger evaluative lens, illustrated in \fig~\ref{fig:representation_forgetting}.

For a set $S$, let $\mathcal{H}_\theta^S=\{h_\theta(x):x\in S\}$ denote the representations with empirical distribution $\mathbb{P}_\theta^S(\mathcal{H})$. We define the representation transformation from $\theta$ to $\theta'$ on $S$ as $\mathcal{T}_{\theta\rightarrow\theta'}^S = \{h_{\theta'}(x)-h_\theta(x):x\in S\}$, with empirical distribution $\mathbb{P}_{\theta\rightarrow\theta'}^S(\mathcal{T})$.

\begin{definition}[Retraining-Consistent Representation Forgetting]
\label{def:rep_forgetting}
We define \emph{retraining-consistent representation forgetting} as follows. Under this lens, an unlearned model $\theta_u$ is assessed not only by whether its representations resemble those of the retrained model $\theta_r$, but also by whether its representation changes relative to the original model $\theta_o$ 
match those induced by retraining. Concretely, for each 
$S \in \{\mathcal{D}_u, \mathcal{D}_r\}$, we assess:
\begin{equation} \label{eq:rep_and_shift_match}
\begin{aligned}
\mathbb{P}_{\theta_u}^S(\mathcal{H}) 
&\approx \mathbb{P}_{\theta_r}^S(\mathcal{H}), \\
\mathbb{P}_{\theta_o\rightarrow\theta_u}^S(\mathcal{T}) 
&\approx \mathbb{P}_{\theta_o\rightarrow\theta_r}^S(\mathcal{T}),
\end{aligned}
\quad \forall S \in \{\mathcal{D}_u, \mathcal{D}_r\}.
\end{equation}
\end{definition}
We use this notion as a \emph{stronger evaluative lens}, not as a universal axiom for all \unlearning~settings. Its role in this paper is diagnostic rather than doctrinal, as endpoint similarity is insufficient to rule out a hidden residual mismatch, since models can agree at the output layer while remaining substantially different in the representation space. In our benchmark-scale setting, retraining provides the strongest available reference for detecting such mismatch.

\subsection{Why Output Forgetting is Too Weak}
\label{sec:output-too-weak}
The following result formalizes a limitation of output-level metrics. See Appen.~\ref{appendix:proof_main_theorem} for the full proof.

\begin{theorem}[Output Forgetting Does Not Imply Retraining-Consistent Representation Forgetting]
\label{theo:forget_discrepancy}
Let $f_\theta(x)=g_\theta(h_\theta(x))$ be a classifier composed of a feature extractor
$h_\theta:\mathcal{X}\to\mathbb{R}^d$ and a prediction head $g_\theta$.
There exists an unlearned model $\theta_u$ such that
$f_{\theta_u}(x)=f_{\theta_r}(x), \quad \forall x\in\mathcal{D},$
while
$\mathbb{P}_{\theta_u}^{S}(\mathcal{H})
\neq
\mathbb{P}_{\theta_r}^{S}(\mathcal{H})$
for at least one $S\in\{\mathcal{D}_u,\mathcal{D}_r\}$.
Hence, perfect output alignment with the retrained model does not guarantee retraining-consistent representations, and representation-level residuals may persist.
\end{theorem}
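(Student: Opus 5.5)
The statement is existential, so the plan is an explicit construction. Starting from the retrained model $\theta_r$, we build $\theta_u$ by composing its feature extractor with an invertible map. We then compensate for that map in the prediction head, so the end-to-end function is unchanged while the penultimate representations move.

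\textbf{Step 1: define the construction.} Fix any invertible linear map $A\in\mathbb{R}^{d\times d}$ with $A\neq I$; for instance, take $A=2I$ or a nontrivial rotation. Define $\theta_u$ by
\begin{equation*}
h_{\theta_u}(x)=A\,h_{\theta_r}(x), \qquad g_{\theta_u}(z)=g_{\theta_r}(A^{-1}z).
\end{equation*}
If $g_{\theta_r}$ begins with a linear layer $z\mapsto Wz+b$, the new head is just $z\mapsto WA^{-1}z+b$. Likewise, if $h$ ends in a linear layer, the new extractor absorbs $A$ into it. So $\theta_u$ stays within the same parametric family. The theorem only requires that some $\theta_u$ exists, so we need not show that any particular unlearning algorithm $\mathcal{U}$ produces it. We could still remark that $\mathcal{U}$ may be taken as the procedure that outputs this parameter.

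\textbf{Step 2: verify output equality.} For every $x\in\mathcal{D}$,
\begin{equation*}
f_{\theta_u}(x)=g_{\theta_r}\bigl(A^{-1}A\,h_{\theta_r}(x)\bigr)=g_{\theta_r}\bigl(h_{\theta_r}(x)\bigr)=f_{\theta_r}(x).
\end{equation*}
It follows that the output distributions of Definition~\ref{def:output_forgetting} coincide exactly on both $\mathcal{D}_u$ and $\mathcal{D}_r$.

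\textbf{Step 3: show the representation distributions differ.} This is the only point needing care, because a map $A\neq I$ could in principle permute the finite multiset $\mathcal{H}^S_{\theta_r}$ onto itself. To rule this out, take $A=cI$ with $c>1$. Pick any $S$ containing a point with $h_{\theta_r}(x)\neq 0$; a nonempty $S$ with nonzero features is assumed, and otherwise the head can be shifted by a translation instead. Let $x^\ast$ maximize $\|h_{\theta_r}(x)\|$ over $S$. Then $\max_{x\in S}\|h_{\theta_u}(x)\|=c\,\|h_{\theta_r}(x^\ast)\|>\|h_{\theta_r}(x^\ast)\|$. The two empirical distributions therefore have different supports, so $\mathbb{P}^S_{\theta_u}(\mathcal{H})\neq\mathbb{P}^S_{\theta_r}(\mathcal{H})$.

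\textbf{Degenerate case and consequence.} If all features vanish on both sets, replace the scaling by a translation: set $h_{\theta_u}=h_{\theta_r}+v$ with $v\neq 0$ and $g_{\theta_u}(z)=g_{\theta_r}(z-v)$. This shifts the support, which yields the same conclusion. Moreover, $\mathcal{T}^S_{\theta_o\to\theta_u}$ differs from $\mathcal{T}^S_{\theta_o\to\theta_r}$ by $(c-1)h_{\theta_r}(x)$ (or by $v$ in the translated case). Hence the transformation condition of Definition~\ref{def:rep_forgetting} fails as well, which strengthens the conclusion.
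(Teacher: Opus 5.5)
Your proof is correct, but it takes a different route from the paper's.

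The paper keeps the head \emph{fixed} as a linear map $g(h)=Wh$ and assumes $\ker(W)\neq\{0\}$, which is automatic when $d>k$. It then perturbs only the forget-set features: $h_{\theta_u}(x)=h_{\theta_r}(x)+z_0$ for $x\in\mathcal{D}_u$, with $0\neq z_0\in\ker(W)$, and $\mathcal{D}_r$ is left untouched. Output equality follows from $Wz_0=0$. You instead use the reparametrization symmetry $h\mapsto Ah$, $g\mapsto g\circ A^{-1}$, which modifies both extractor and head.

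Your route buys generality. It needs no linearity or kernel assumption on the head, and the paper's proof imposes these even though the theorem statement does not. Your two models agree on all of $\mathcal{X}$, not only on $\mathcal{D}$. Your $\theta_u$ is also realizable within the same architecture by absorbing $A$ into adjacent linear layers, whereas the paper's piecewise extractor is just a function. In fact your translation fallback alone suffices for any nonempty $S$, since it moves the empirical mean by $v$, so the case split in Step 3 is unnecessary.

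The paper's route buys relevance. The head is identical, and the discrepancy is confined to $\mathcal{D}_u$, changing where forget features sit relative to retain features. That is the forget-specific, output-invisible residual its diagnostics target. Your global scaling or uniform translation is a pure gauge change: it leaves the forget/retain geometry intact and keeps centered linear CKA against $\theta_r$ exactly at $1$. So it proves the stated distributional inequality but exhibits no residual of the kind the paper goes on to measure.
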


\subsection{Diagnostics for the Hidden Failure Mode}
Our goal is not to introduce geometry for its own sake, but to use it as a diagnostic toolkit for exposing \emph{how} current methods fail under a retraining-consistent representation lens. 

\paragraph{Representation-level leakage.}
Membership inference attack (MIA)~\cite{carlini2022membership} evaluates whether a sample belongs to the training set. In the unlearning setting, $\mathrm{MIA}_{\mathrm{logit}}$ uses output statistics such as confidence or entropy, while $\mathrm{MIA}_{\mathrm{rep}}$ operates on feature embeddings $h_{\theta_u}(x)$~\cite{le2025pour} and captures separability in representation space. To compare residual leakage consistently across output and representation levels, we use the normalized leakage ratio:
\begin{equation}
\rho_*=
\frac{|\mathrm{MIA}_*^u-\mathrm{MIA}_*^r|}{|\mathrm{MIA}_*^o-\mathrm{MIA}_*^r|},
\qquad *\in\{\mathrm{logit},\mathrm{rep}\},
\end{equation}
where $\mathrm{MIA}_*^o$, $\mathrm{MIA}_*^u$, and $\mathrm{MIA}_*^r$ denote the attack success rates of the original, unlearned, and retrained models, respectively. Lower $\rho_*$ indicates less residual leakage relative to retraining. A systematic analysis of MIA robustness across configurations and the selection rationale are provided in Appendix~\ref{appendix:metric_behavior}.

\paragraph{Representation similarity.}
To quantify how closely $\theta_u$ matches $\theta_r$ in representation geometry, we use centered kernel alignment (CKA)~\cite{kornblith2019similarity}. Given representation matrices $X,Y\in\mathbb{R}^{n\times d}$ evaluated on the same data, linear CKA is defined as:
\begin{equation}
\mathrm{CKA}(X,Y)=
\frac{\langle XX^\top,YY^\top\rangle_F}{\|XX^\top\|_F\|YY^\top\|_F}.
\end{equation}
Higher CKA indicates that the unlearned representation is closer to the retrained representation.

\paragraph{Transformation mismatch.}
To characterize how unlearning moves representations relative to retraining, we define the mean representation shift from $\theta_o$ to $\theta'$ on a set $S$ as:
\begin{equation}
\Delta_{\theta_o\rightarrow\theta'}^S
=
\frac{1}{|S|}\sum_{x\in S} h_{\theta'}(x)
-
\frac{1}{|S|}\sum_{x\in S} h_{\theta_o}(x),
\qquad \theta'\in\{\theta_u,\theta_r\}.
\end{equation}
We then measure directional alignment via cosine similarity:
\begin{equation}
\cos\!\left(\Delta_{\theta_o\rightarrow\theta_u}^S,\Delta_{\theta_o\rightarrow\theta_r}^S\right)
=
\frac{\Delta_{\theta_o\rightarrow\theta_u}^S\cdot\Delta_{\theta_o\rightarrow\theta_r}^S}
{\|\Delta_{\theta_o\rightarrow\theta_u}^S\|\|\Delta_{\theta_o\rightarrow\theta_r}^S\|}.
\end{equation}
This quantity is not itself the paper's main contribution; rather, it diagnoses whether unlearning follows a retraining-consistent transformation on forget and retain samples. To further localize the discrepancy, we decompose representations along the retraining direction. Let
$v^S=\frac{\Delta_{\theta_o\rightarrow\theta_r}^S}{\|\Delta_{\theta_o\rightarrow\theta_r}^S\|},$
and decompose
$h_{\theta_u}^{\parallel}(x)=(h_{\theta_u}(x)\cdot v^S)\,v^S, \, h_{\theta_u}^{\perp}(x)=h_{\theta_u}(x)-h_{\theta_u}^{\parallel}(x).$
This subspace decomposition is used to test whether residual leakage and representation mismatch are diffuse or instead concentrated along retraining-related directions.

\section{Experimental Results}
\label{sec:results}

\subsection{Experimental Setup}
\paragraph{Datasets, models, and unlearning scenario.}
We study class unlearning, where the goal is to remove the influence of one designated class while preserving performance on the retain set $\mathcal{D}_r$. We evaluate on CIFAR-10, CIFAR-100~\cite{krizhevsky2009learning}, and TinyImageNet~\cite{le2015tiny}, which are standard image classification benchmarks for class-level unlearning. Our primary analysis is conducted on ResNet-18~\cite{kodge2024deep}, with ResNet-50 for model-capacity scaling and ViT-Tiny~\cite{dosovitskiy2020image} for cross-backbone validation.

\paragraph{Baselines and unlearning methods.}
We use the original model $\theta_o$ (without unlearning) and the retrained model $\theta_r$ (trained from scratch on $\mathcal{D}_r$ only) as the lower and upper references.We evaluate five output-oriented methods, including SCRUB~\cite{kurmanji2023towards}, Boundary Shrink~\cite{chen2023boundary}, UNSIR~\cite{tarun2023fast}, Amnesiac~\cite{graves2021amnesiac}, and SSD~\cite{foster2024fast}. We also include POUR-P and POUR-D~\cite{le2025pour}, which are representation-aware unlearning methods, to test whether the same failure mode persists even when the methods explicitly operate in representation space. Additional experimental details are provided in Appendix~\ref{appendix:experimental_details}.

\paragraph{Evaluation metrics.}
Following \s~\ref{sec:preliminaries}, we report both output-level and representation-level quantities. Output-level metrics are forget-set accuracy $\text{Acc}_u$, retain-set accuracy $\text{Acc}_r$, and logit-level membership inference $\text{MIA}_{\text{logit}}$. Representation-level diagnostics are forget-set and retain-set CKA against $\theta_r$ ($\text{CKA}_u$ and $\text{CKA}_r$), representation-level membership inference $\text{MIA}_{\text{rep}}$, cosine alignment between unlearning and retraining shifts, and subspace-localized residual analyses. Lower $\text{Acc}_u$ and lower MIA indicate stronger forgetting under the corresponding metric, whereas higher $\text{Acc}_r$, higher CKA, and higher directional alignment indicate closer approximation to retraining.

\begin{table}[t]
\centering
\begin{adjustbox}{max width=\linewidth}
\begin{tabular}{l|ccc|ccc|l}
\toprule
\multirow{2}{*}{\textbf{Method}} 
& \multicolumn{3}{c|}{\makecell[c]{\textbf{Appears forgotten under} \\ \textbf{standard output evaluation}}}
& \multicolumn{3}{c|}{\makecell[c]{\textbf{Actually close to retraining} \\ \textbf{in representation space?}}}
& \multirow{2}{*}{\makecell[c]{\textbf{Interpretation}}} \\
\cmidrule(lr){2-4}\cmidrule(lr){5-7}
& \makecell[c]{\textbf{Acc$_u$} \\ \textbf{(\%) $\downarrow$}}
& \makecell[c]{\textbf{Acc$_r$} \\ \textbf{(\%) $\uparrow$}}
& \makecell[c]{\textbf{MIA$_{\text{logit}}$} \\ \textbf{(\%) $\downarrow$}}
& \makecell[c]{\textbf{CKA$_u$} \\ \textbf{$\uparrow$}}
& \makecell[c]{\textbf{CKA$_r$} \\ \textbf{$\uparrow$}}
& \makecell[c]{\textbf{MIA$_{\text{rep}}$} \\ \textbf{(\%) $\downarrow$}}
& \\
\midrule
\textbf{Original}        
& 98.65 & 98.35 & 81.1 
& 0.564 & 0.913 & 47.7 
& \makecell[l]{Original model} \\
\textbf{Retrain}         
& 0.00 & 98.74 & 26.3 
& 1.000 & 1.000 & 44.5 
& \makecell[l]{Reference for correct forgetting} \\
\midrule
\textbf{SCRUB}~\cite{kurmanji2023towards}          
& \textbf{0.00} & \textbf{98.98} & 40.2 
& 0.513 & 0.807 & 47.4 
& \makecell[l]{Looks forgotten, still far from retrain} \\
\textbf{Boundary Shrink}~\cite{chen2023boundary} 
& 13.55 & 84.27 & 22.1 
& 0.438 & 0.743 & 47.3 
& \makecell[l]{Weak on both views} \\
\textbf{UNSIR}~\cite{tarun2023fast}           
& \textbf{0.00} & 27.55 & 20.1 
& 0.015 & 0.042 & 45.0 
& \makecell[l]{Forgets destructively} \\
\textbf{Amnesiac}~\cite{graves2021amnesiac}        
& \textbf{0.00} & \textbf{97.87} & \textbf{4.2} 
& 0.086 & 0.766 & 47.0 
& \makecell[l]{Strong output forgetting, weak rep consistency} \\
\textbf{SSD}~\cite{foster2024fast}             
& \textbf{0.00} & \textbf{97.39} & \textbf{4.9} 
& 0.593 & 0.892 & 50.9 
& \makecell[l]{Strong output forgetting, highest rep leakage} \\
\textbf{POUR-P}~\cite{le2025pour}          
& \textbf{0.00} & \textbf{98.36} & 10.8 
& 0.612 & 0.912 & 48.3 
& \makecell[l]{Better rep consistency, still not retrain} \\
\textbf{POUR-D}~\cite{le2025pour}          
& 0.02 & 96.90 & 11.0 
& 0.618 & 0.883 & 47.9 
& \makecell[l]{Better rep consistency, still not retrain} \\
\bottomrule
\end{tabular}
\end{adjustbox}
\caption{Standard output-level evaluation can overestimate successful unlearning. Several methods, notably SCRUB\cite{kurmanji2023towards}, Amnesiac\cite{graves2021amnesiac}, and SSD\cite{foster2024fast}, appear strong under forget accuracy, retain accuracy, and logit-level MIA, yet remain substantially inconsistent with the retrained reference in representation space. This gap is the first empirical signal of the hidden failure mode studied in this paper.}
\label{tab:main_results}
\vspace{-8pt}
\end{table}

\subsection{Standard Output-level Evaluation overestimates Successful Unlearning}

We begin with the paper’s central claim, \ie~whether the field’s default success signal, namely output-level forgetting, can overestimate successful unlearning. The answer is \textbf{YES}.

As shown in \tab~\ref{tab:main_results}, several methods, including SCRUB~\cite{kurmanji2023towards}, Amnesiac~\cite{graves2021amnesiac}, and SSD~\cite{foster2024fast}, achieve near-zero $\text{Acc}_u$ together with reduced $\text{MIA}_{\text{logit}}$, which would typically be interpreted as successful forgetting under standard output-level evaluation. However, these same models remain far from the retrained reference in representation space. For instance, their $\text{CKA}_u$ is substantially below that of $\theta_r$, and their $\text{MIA}_{\text{rep}}$ remains elevated. Under the stronger retraining-consistent representation lens adopted in this paper, these methods therefore remain substantially inconsistent with retraining.

The same pattern is visible in \fig~\ref{fig:normalized_leakage_ratio}. Most methods lie above the diagonal, indicating that normalized leakage remains higher in the representation space than at the output-level. In other words, standard output-level evaluation paints a more optimistic picture than the stronger representation-level diagnostics support. This is the first component of the hidden failure mode - \emph{the field’s dominant success signal is too weak}. Linear probing and t-SNE visualization further support this claim in Appendix~\ref{appendix:linear_probe}.

\begin{figure}[t]
    \centering
    \begin{minipage}{0.38\textwidth}
        \centering
        \includegraphics[width=\textwidth]{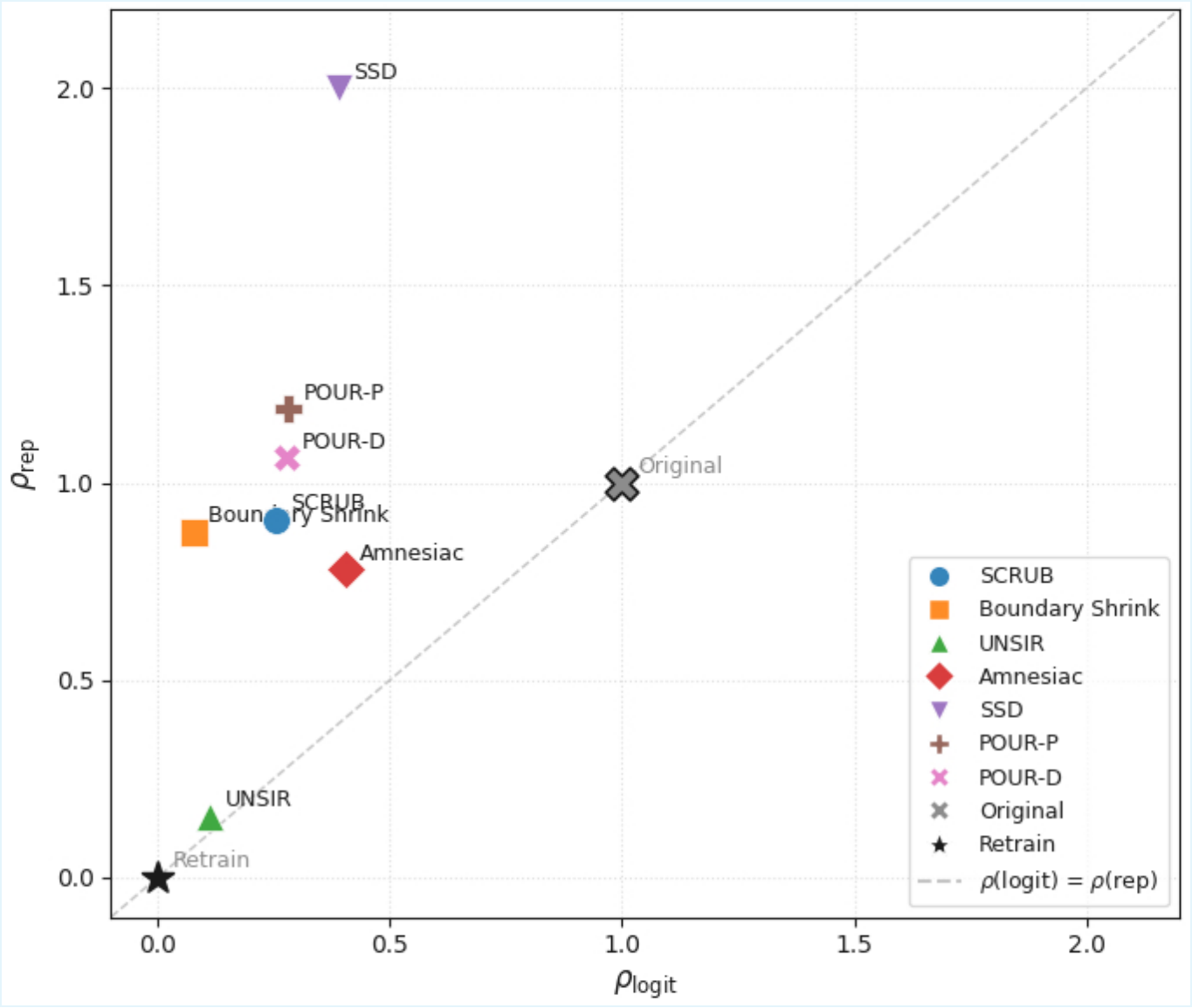}
        \caption{$\rho_{\text{logit}}$ against $\rho_{\text{rep}}$ on CIFAR-10 with ResNet-18. Methods above the diagonal exhibit more residual leakage in representation space than at the output-level.}
        \label{fig:normalized_leakage_ratio}
    \end{minipage}
    \hfill
    \begin{minipage}{0.58\textwidth}
        \renewcommand{\arraystretch}{1.2}
        \centering
        {\small
        \begin{adjustbox}{max width=\linewidth}
        \begin{tabular}{l|ccc}
            \toprule
            \textbf{Method}  & \makecell[c]{\textbf{Forget set} \\ \(\mathbf{\cos(\Delta_u)} \uparrow\)}
            & \makecell[c]{\textbf{Retain set} \\ \(\mathbf{\cos(\Delta_r)} \uparrow\)}
            & \makecell[c]{\textbf{Asymmetry gap} \\ \(\mathbf{\cos(\Delta_u)-\cos(\Delta_r)} \downarrow\)} \\
            
            \midrule
            SCRUB~\cite{kurmanji2023towards} & 0.945 & \textbf{$-$0.345} & \textbf{1.290} \\
            Boundary Shrink~\cite{chen2023boundary} & 0.916 & \textbf{$-$0.305} & \textbf{1.221} \\
            UNSIR~\cite{tarun2023fast} & 0.720 & 0.206 & 0.514 \\
            Amnesiac~\cite{graves2021amnesiac} & 0.795 & 0.432 & 0.364 \\
            SSD~\cite{foster2024fast} & 0.911 & \textbf{$-$0.389} & \textbf{1.300} \\
            POUR-P~\cite{le2025pour} & 0.854 & 0.468 & 0.386 \\
            POUR-D~\cite{le2025pour} & 0.884 & 0.074 & 0.810 \\
            \bottomrule
        \end{tabular}
        \end{adjustbox}
        }
        \captionsetup{type=table}
        \caption{The mismatch with retraining is asymmetric across forget and retain samples. Most methods align much more strongly with retraining on the forget set than on the retain set; negative values of \(\cos(\Delta_r)\) indicate especially poor retain-side alignment, and the asymmetry gap \(\cos(\Delta_u)-\cos(\Delta_r)\) makes this imbalance explicit.}
        \label{tab:directional_alignment}
    \end{minipage}
    \vspace{-5pt}
\end{figure}

\subsection{Mismatch with Retraining is Asymmetric across Forget and Retain Samples}

We next ask whether the mismatch with retraining is uniform across samples. It is \textbf{NOT}. 

Instead, the discrepancy is asymmetric between the forget set $\mathcal{D}_u$ and the retain set $\mathcal{D}_r$. As shown in \tab~\ref{tab:directional_alignment} and \fig~\ref{fig:dir_cka_align}, unlearning methods often exhibit relatively high directional alignment with retraining on $\mathcal{D}_u$, but much weaker alignment on $\mathcal{D}_r$. At the same time, this directional agreement on $\mathcal{D}_u$ does not translate into retraining-consistent representation geometry as $\text{CKA}_u$ remains substantially lower than expected under the retrained reference. In contrast, $\mathcal{D}_r$ often preserves higher geometric similarity despite weaker directional alignment. Shift magnitude diagnostics are provided as a descriptive complement in Appendix~\ref{appendix:shift_mag_ratio}.

This asymmetry is important. It shows that current methods are not simply “wrong everywhere.” Instead, they exhibit a partial and uneven approximation of retraining. They often move in the right coarse direction on forget samples, yet fail to reproduce retraining-consistent behavior on retain samples and fail to recover retraining-consistent geometry on the forget set. This is the second component of the hidden failure mode - \emph{unlearning is asymmetrically misaligned with retraining across forget and retain samples}.

\begin{figure}[t]
    \centering
    \begin{minipage}{0.48\textwidth}
        \centering
        \begin{subfigure}{0.48\textwidth}
            \centering
            \includegraphics[width=\textwidth]{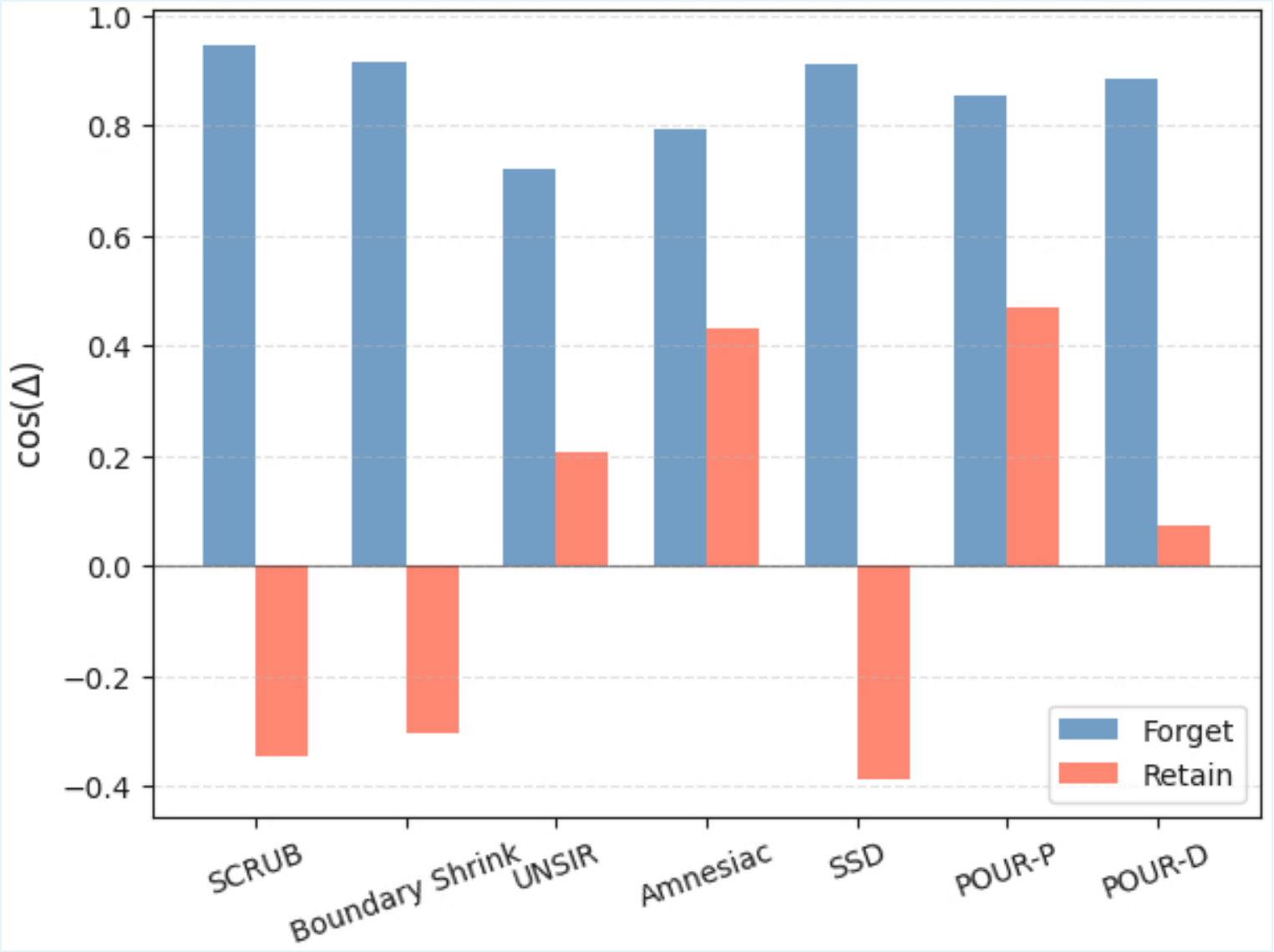}
            \captionsetup{justification=centering}
            \caption{$\cos(\Delta)$}
            \label{fig:dir_cka_align_a_cos}
        \end{subfigure}
        \hfill
        \begin{subfigure}{0.48\textwidth}
            \centering
            \includegraphics[width=\textwidth]{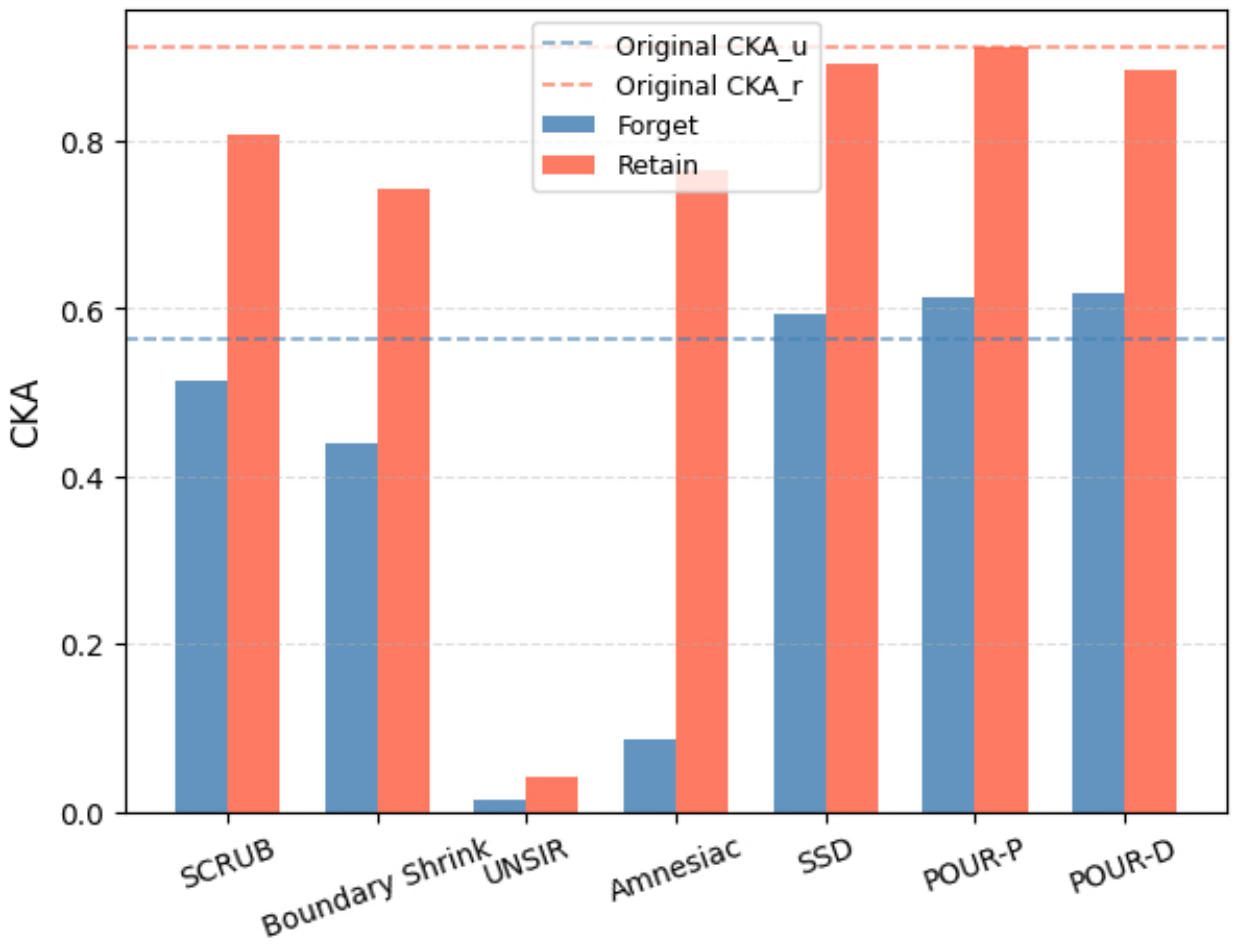}
            \captionsetup{justification=centering}
            \caption{$\text{CKA}$}
            \label{fig:dir_cka_align_b_cka}
        \end{subfigure}
        \caption{Directional alignment and representation alignment on CIFAR-10 with ResNet-18. Forget and retain samples exhibit different patterns of mismatch relative to retraining.}
        \label{fig:dir_cka_align}
    \end{minipage}
    \hfill
    \nextfloat
    \begin{minipage}{0.48\textwidth}
        \centering
        \begin{subfigure}{0.48\textwidth}
            \centering
            \includegraphics[width=\textwidth]{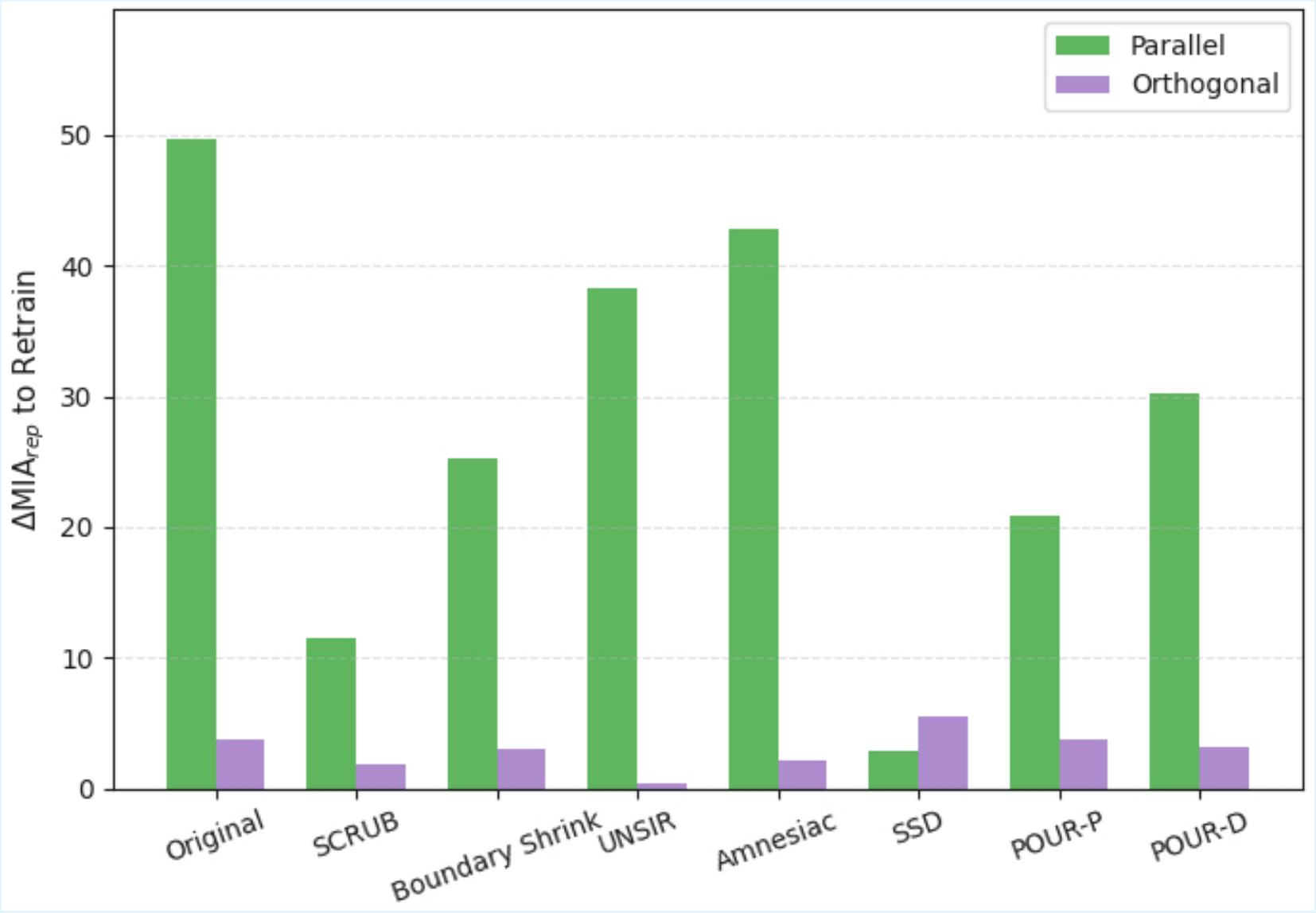}
            \captionsetup{justification=centering}
            \caption{$\Delta\mathrm{MIA}_\mathrm{rep}$}
            \label{fig:subspace_a_mia}
        \end{subfigure}
        \hfill
        \begin{subfigure}{0.48\textwidth}
            \centering
            \includegraphics[width=\textwidth]{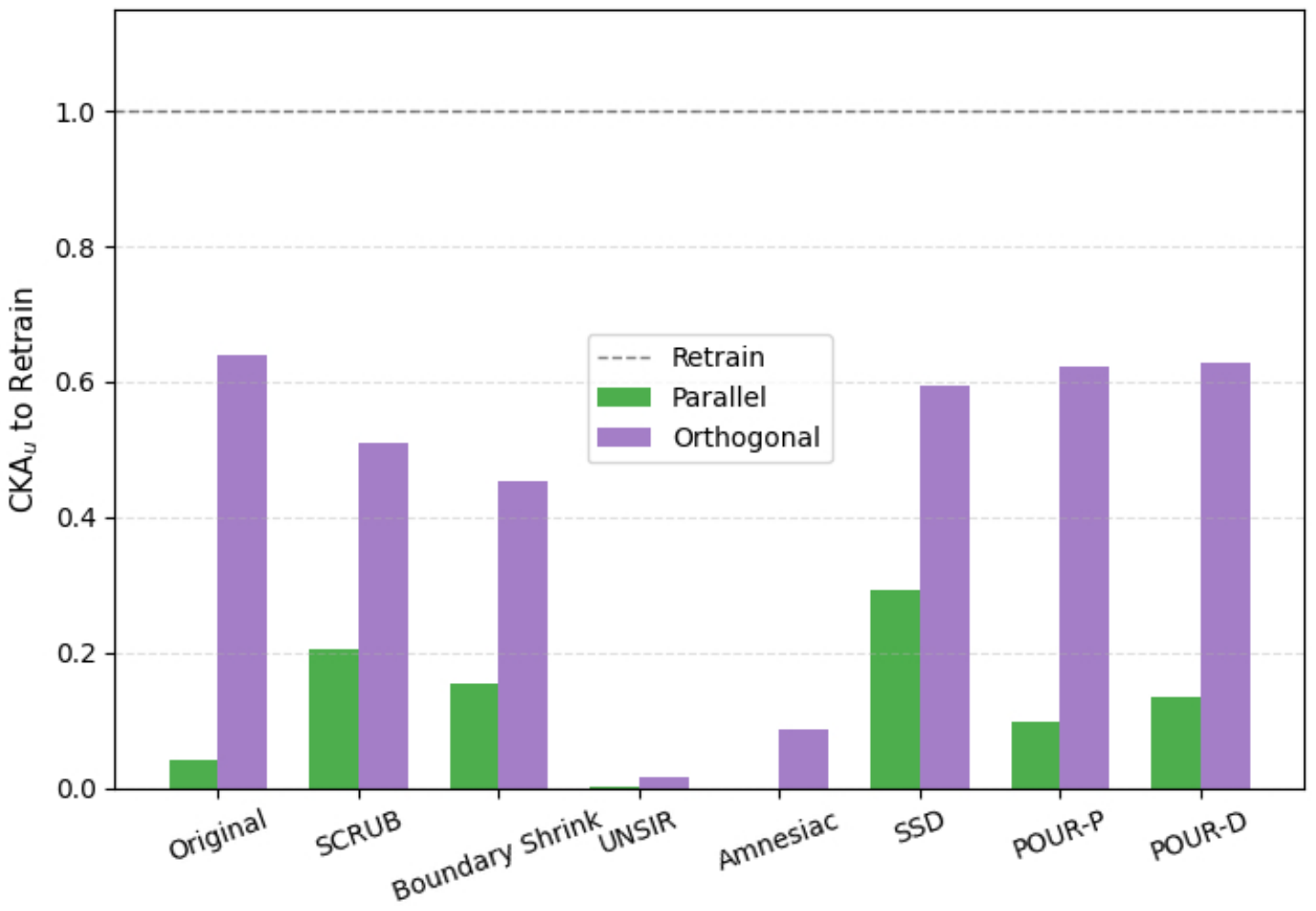}
            \captionsetup{justification=centering}
            \caption{$\text{CKA}_u$}
            \label{fig:subspace_b_cka}
        \end{subfigure}
        \caption{Residual leakage and representation mismatch by subspace on CIFAR-10 with ResNet-18.}
        \label{fig:subspace}
    \end{minipage}
    \vspace{-10pt}
\end{figure}
\begin{figure}[t]
    \centering
    \begin{minipage}{0.48\textwidth}
        \centering
        \begin{subfigure}{0.48\textwidth}
            \centering
            \includegraphics[width=\textwidth]{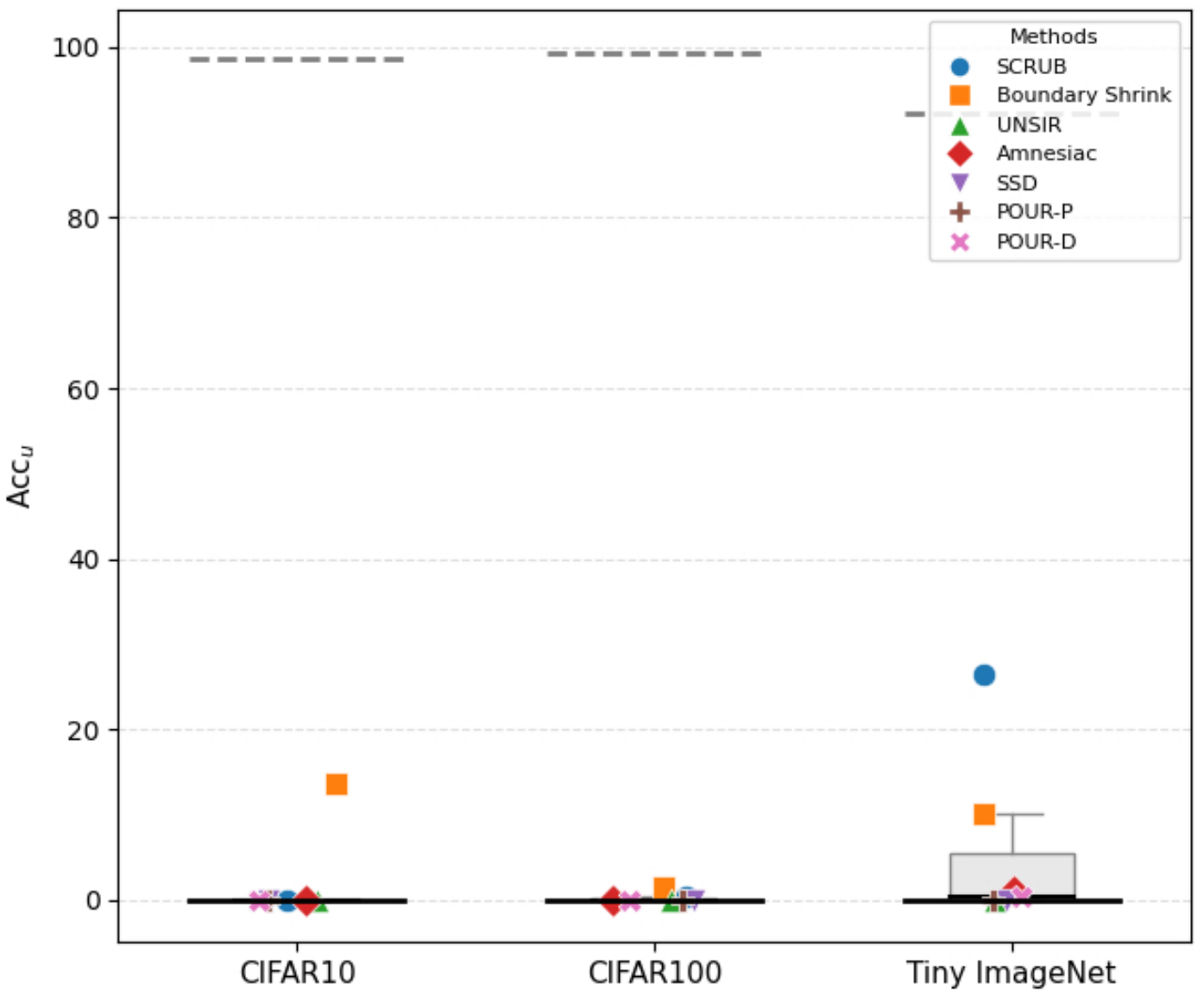}
            \captionsetup{justification=centering}
            \caption{$\text{Acc}_u$}
            \label{fig:dataset_scaling_a_acc}
        \end{subfigure}
        \hfill
        \begin{subfigure}{0.48\textwidth}
            \centering
            \includegraphics[width=\textwidth]{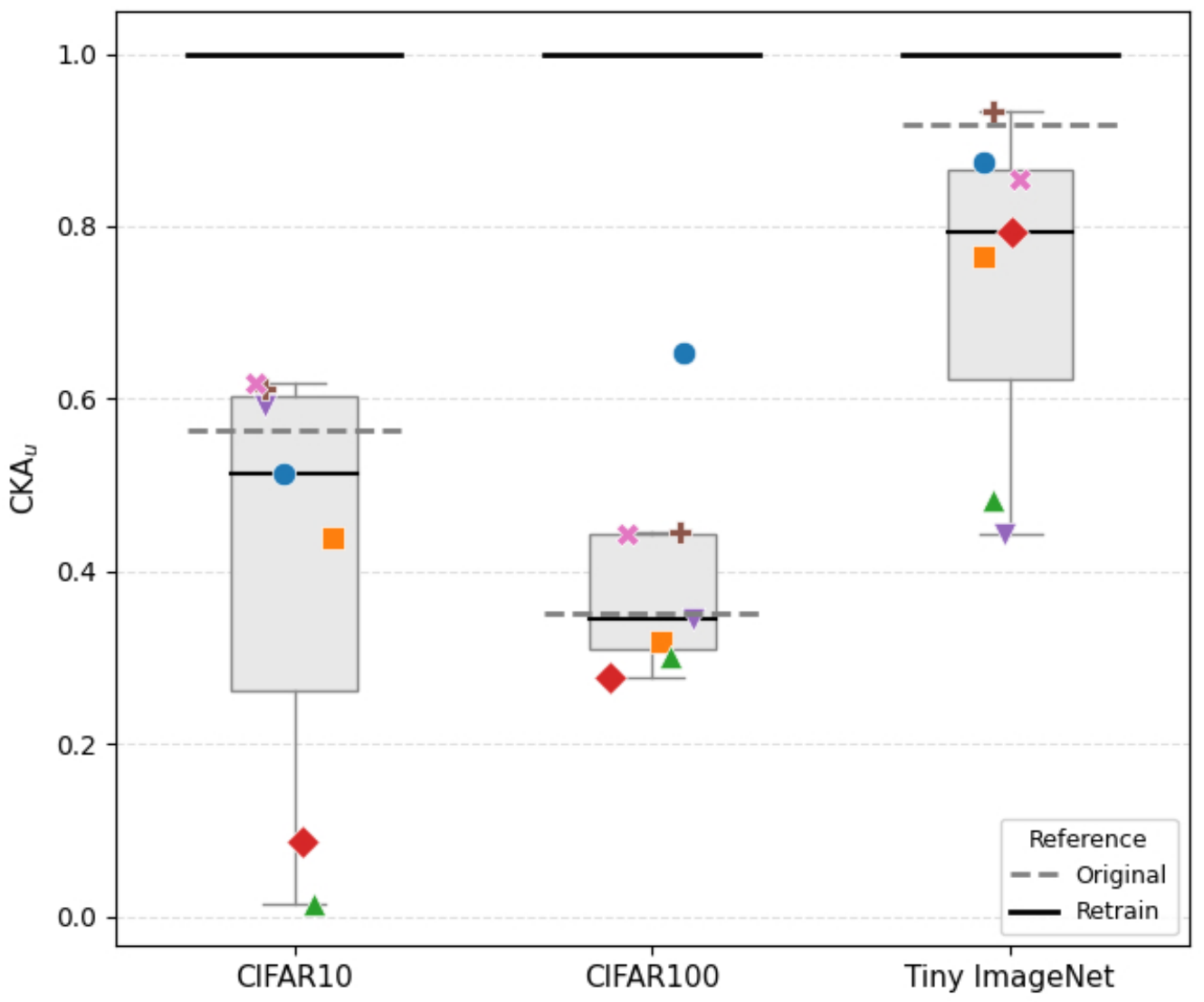}
            \captionsetup{justification=centering}
            \caption{$\text{CKA}_u$}
            \label{fig:dataset_scaling_b_cka}
        \end{subfigure}
        \caption{Output-level and representation-level forgetting across dataset complexity with ResNet-18.}
        \label{fig:dataset_scaling}
    \end{minipage}
    \hfill
    \nextfloat
    \begin{minipage}{0.48\textwidth}
        \centering
        \begin{subfigure}{0.48\textwidth}
            \centering
            \includegraphics[width=\textwidth]{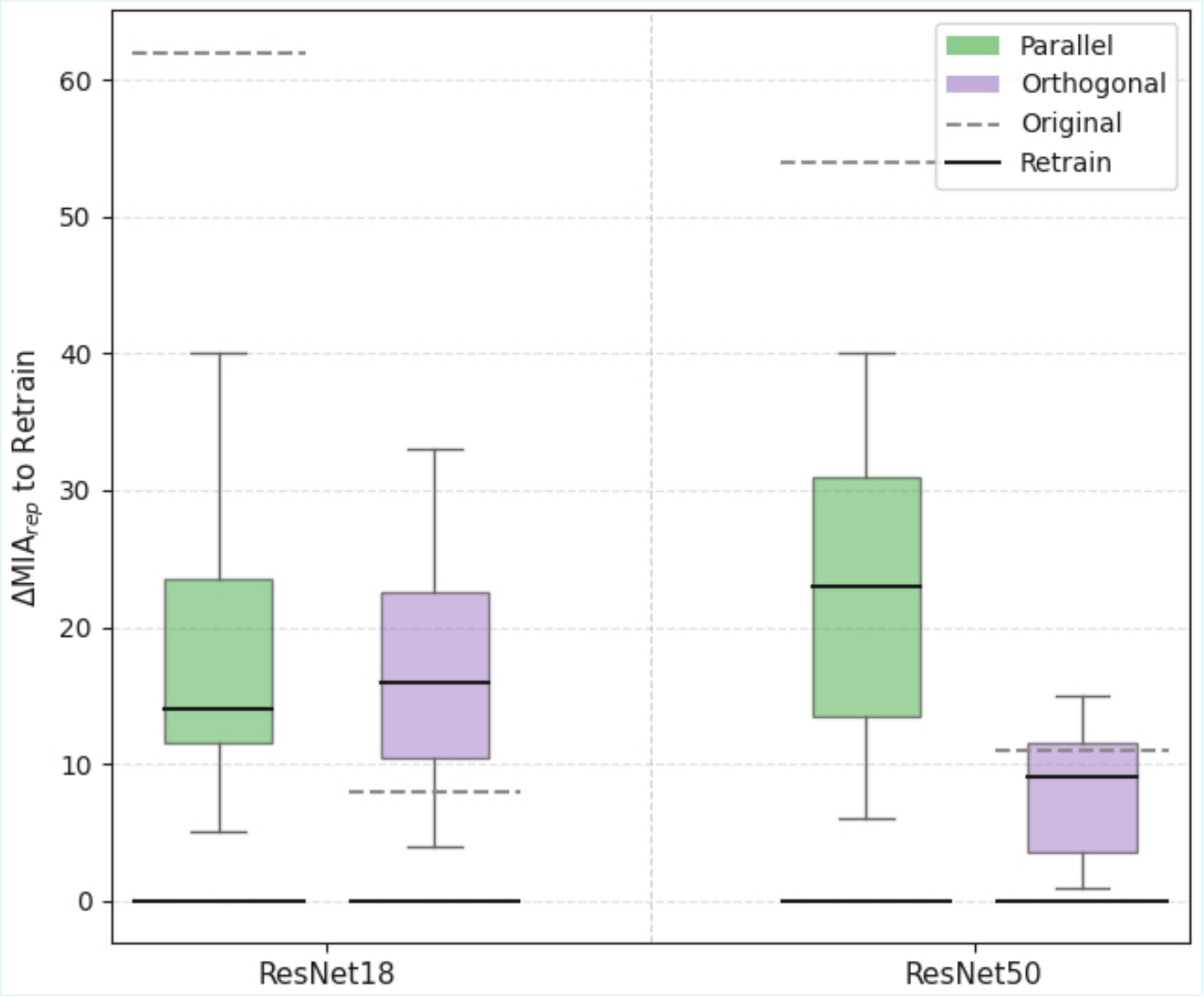}
            \captionsetup{justification=centering}
            \caption{$\Delta\mathrm{MIA}_\mathrm{rep}$ by subspace}
            \label{fig:model_scaling_a_mia}
        \end{subfigure}
        \hfill
        \begin{subfigure}{0.48\textwidth}
            \centering
            \includegraphics[width=\textwidth]{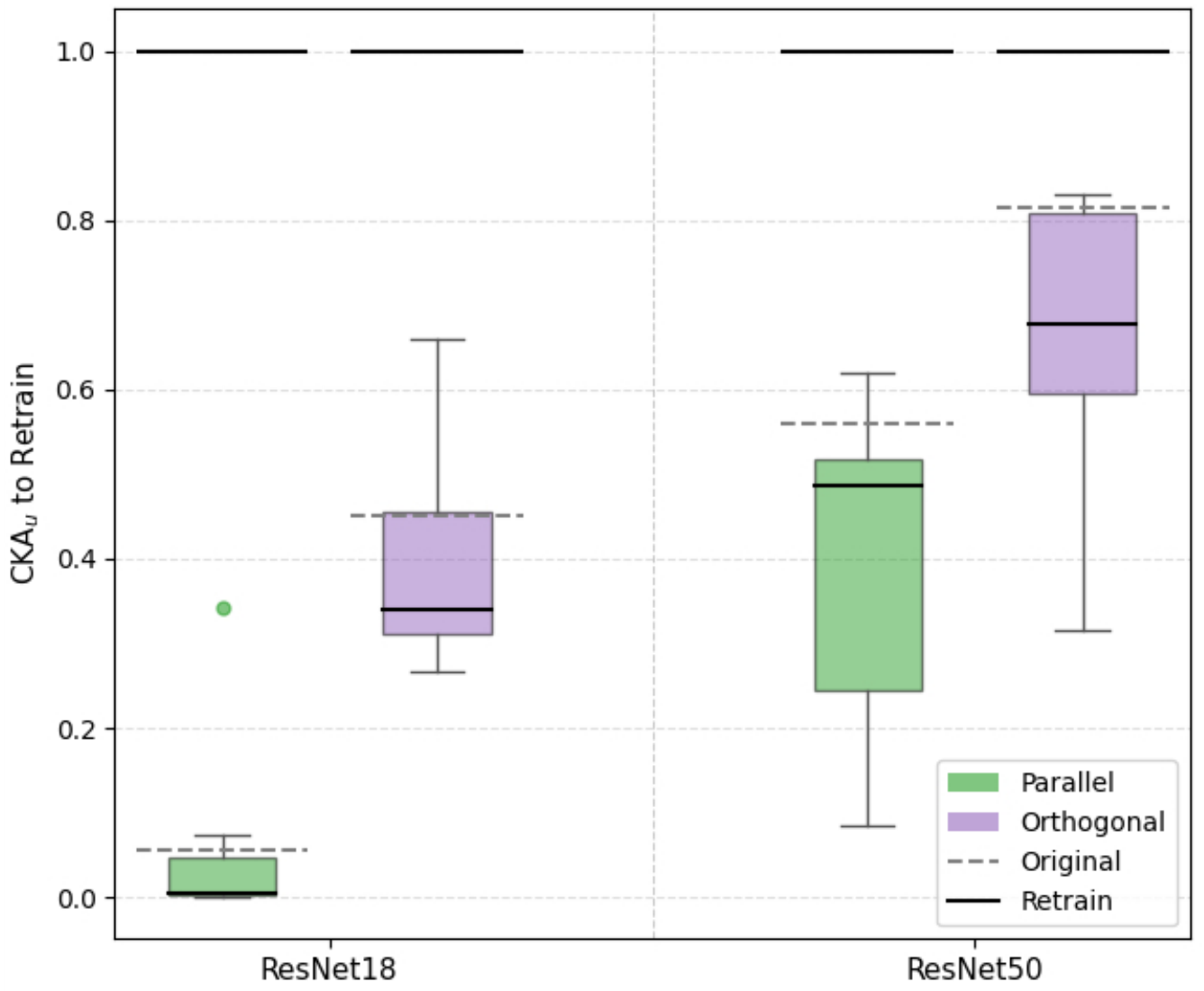}
            \captionsetup{justification=centering}
            \caption{$\text{CKA}_u$ by subspace\vspace{\baselineskip}}
            \label{fig:model_scaling_b_cka}
        \end{subfigure}
        \caption{Residual leakage and representation mismatch across model size on CIFAR-100.}
        \label{fig:model_scaling}
    \end{minipage}
    \vspace{-5pt}
\end{figure}
\begin{figure}[t]
    \centering
    \begin{minipage}{0.48\textwidth}
        \centering
        \begin{subfigure}{0.48\textwidth}
            \centering
            \includegraphics[width=\textwidth]{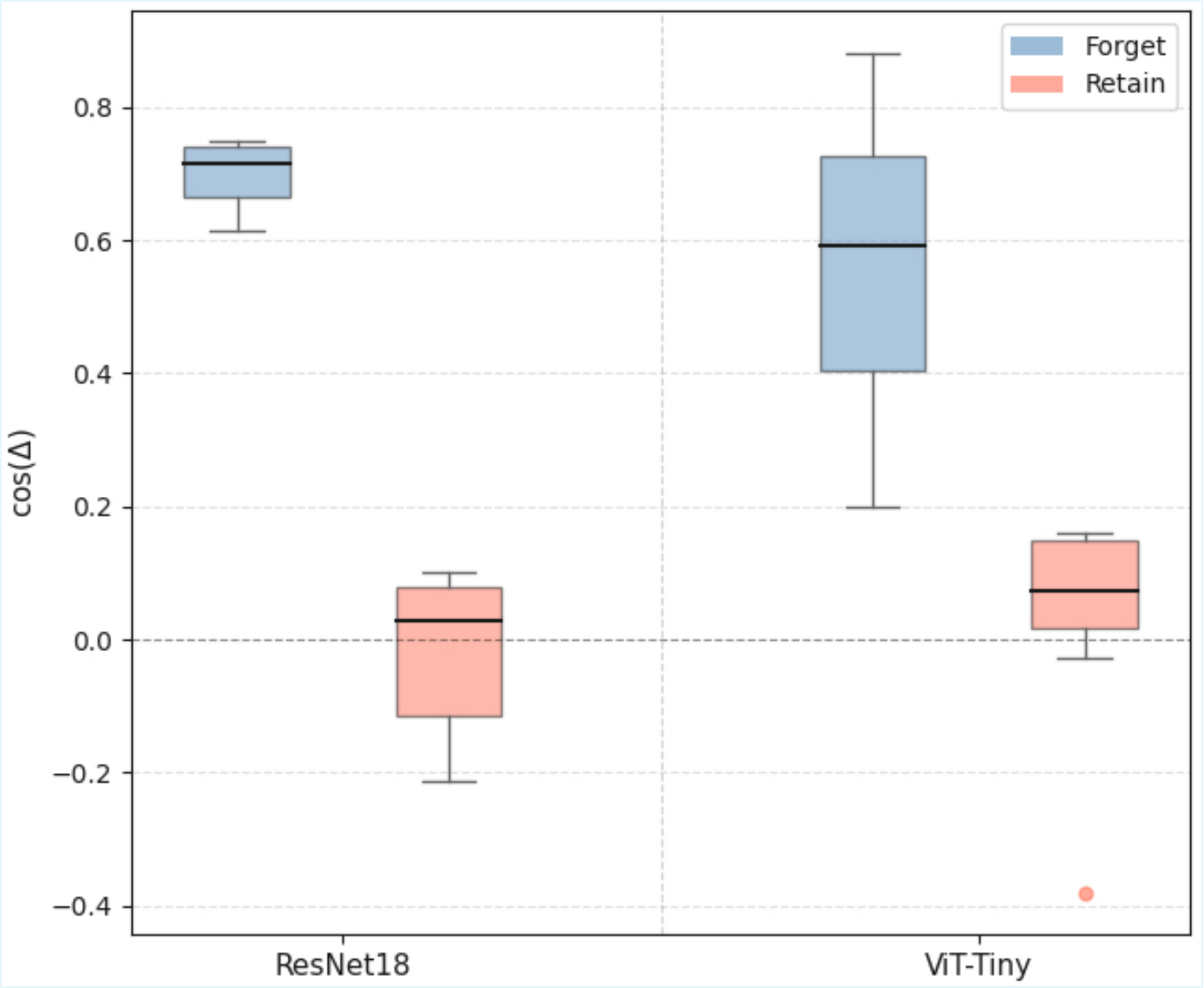}
            \captionsetup{justification=centering}
            \caption{$\cos(\Delta)$}
            \label{fig:cross_backbone_a_cos}
        \end{subfigure}
        \hfill
        \begin{subfigure}{0.48\textwidth}
            \centering
            \includegraphics[width=\textwidth]{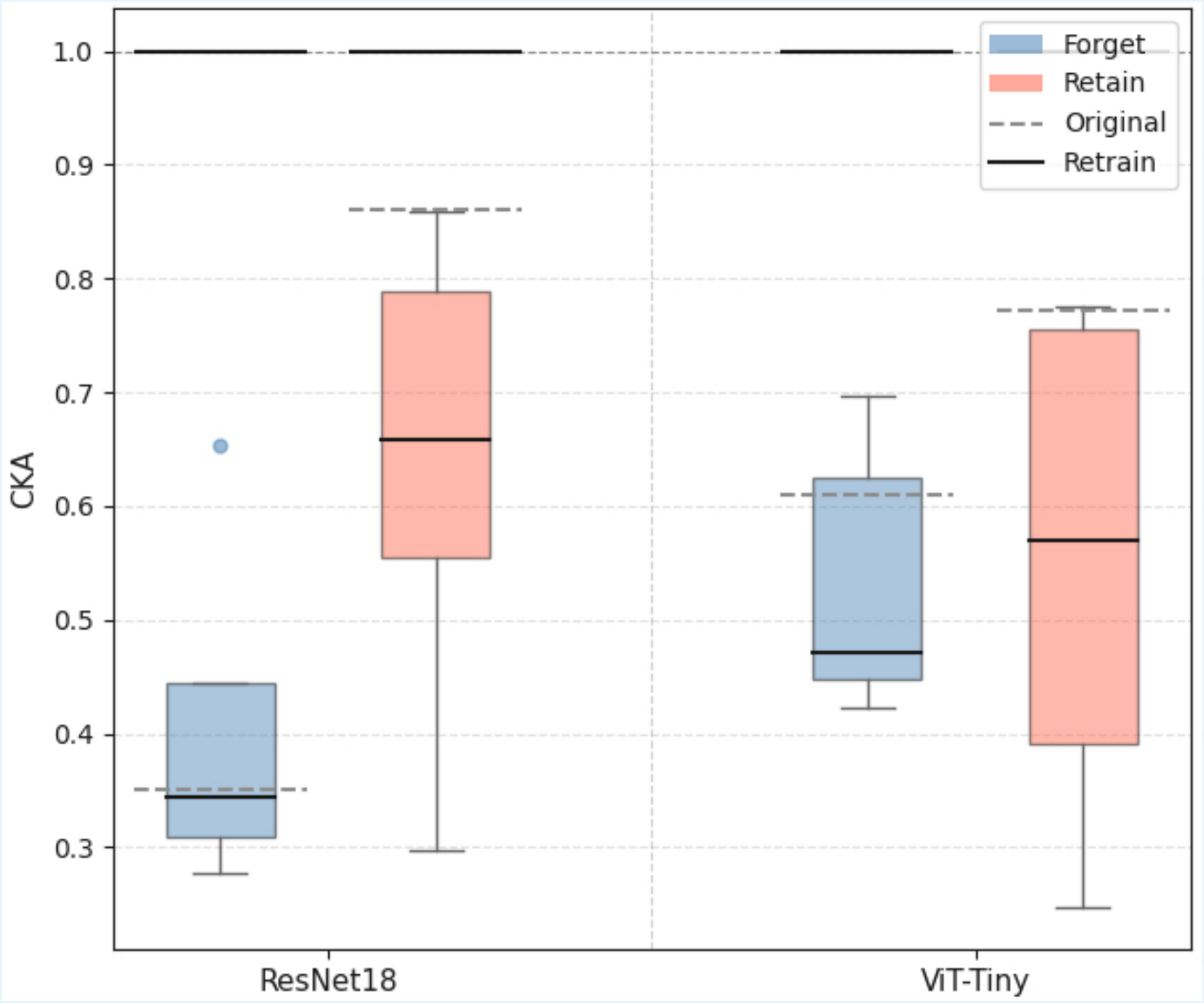}
            \captionsetup{justification=centering}
            \caption{$\text{CKA}$}
            \label{fig:cross_backbone_b_cka}
        \end{subfigure}
        \caption{Cross-backbone validation on CIFAR-100 with ViT-Tiny.}
        \label{fig:cross_backbone}
    \end{minipage}
    \hfill
    \nextfloat
    \begin{minipage}{0.24\textwidth}
        \centering
        \includegraphics[width=\textwidth]{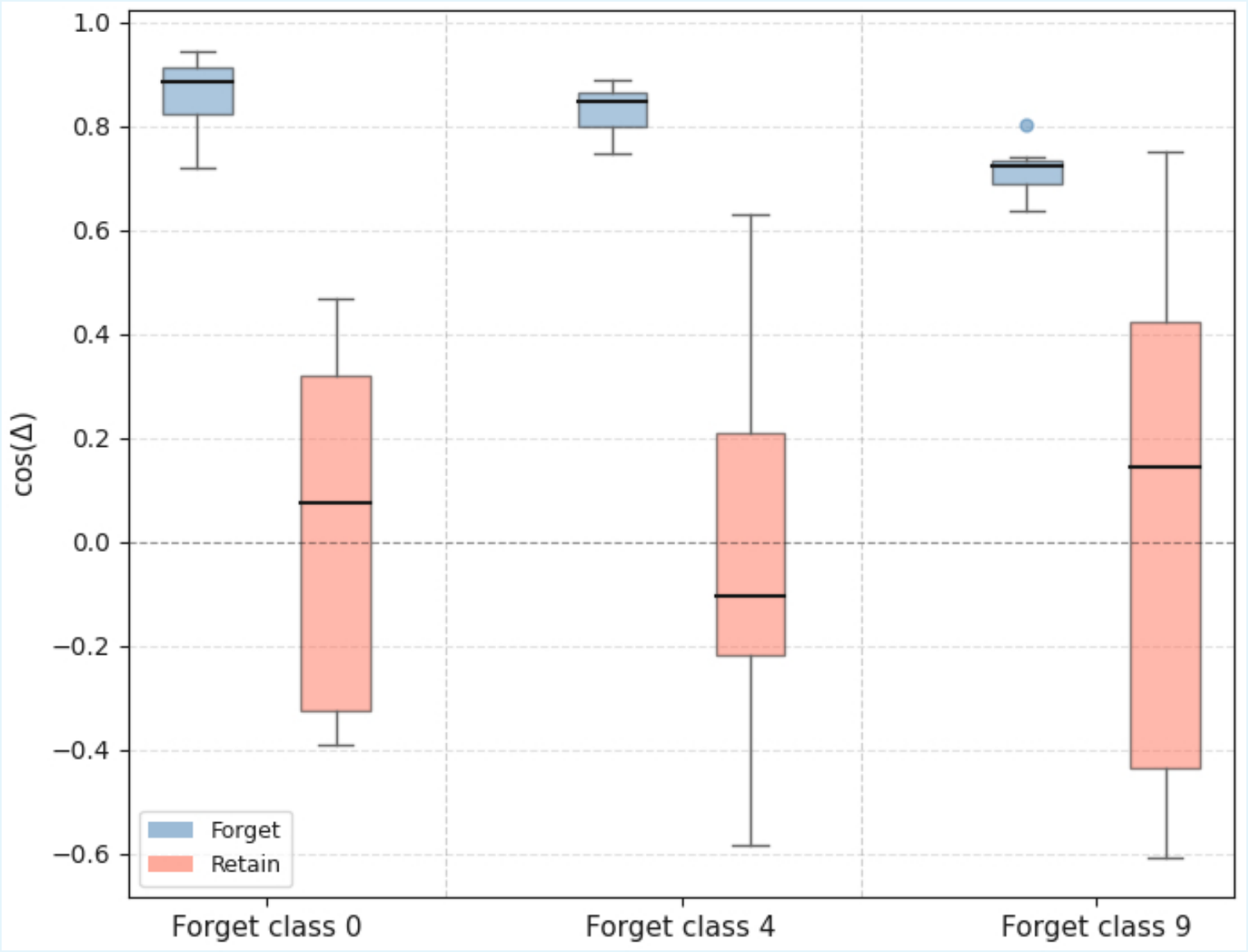}
        \caption{Directional asymmetry across forget classes.}
        \label{fig:multi_forget_class}
    \end{minipage}
    \hfill
    \nextfloat
    \begin{minipage}{0.24\textwidth}
        \centering
        \includegraphics[width=\textwidth]{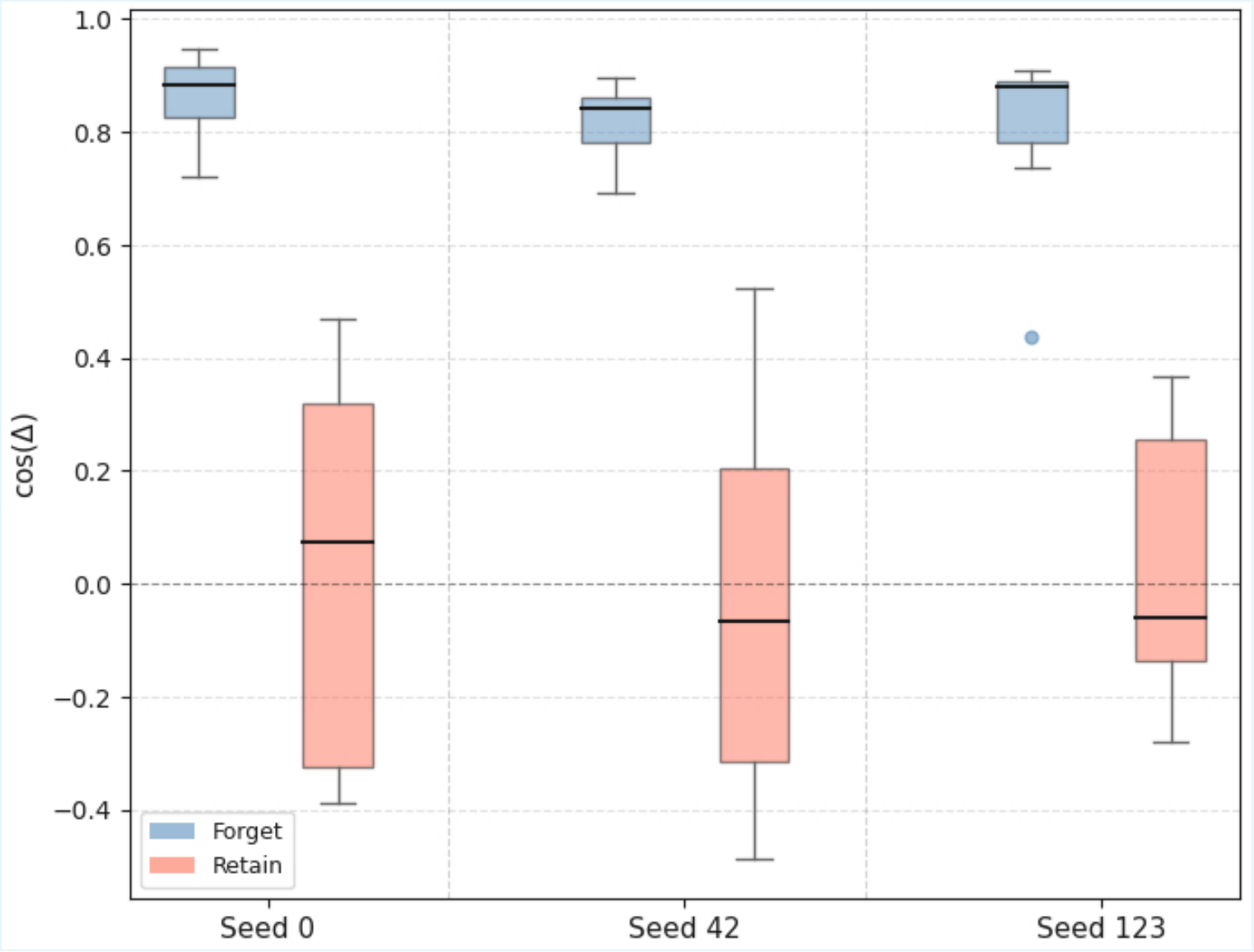}
        \caption{Directional asymmetry across random seeds.}
        \label{fig:seed_robustness}
    \end{minipage}
    \vspace{-10pt}
\end{figure}

\subsection{Residual Discrepancy is Structured rather than Random}

The next question is whether the remaining discrepancy is merely diffuse error, or whether it has structure. Our results support the latter. 

\fig~\ref{fig:subspace} shows that both the residual $\text{MIA}_{\text{rep}}$ to $\theta_r$ and the reduction in $\text{CKA}_u$ are strongest along the retraining shift direction and substantially weaker in the orthogonal subspace. This pattern also persists in the representation-aware POUR methods. The discrepancy is hence not well described as diffuse noise around retraining. Instead, it is organized along retraining-related directions. This is the third component of the hidden failure mode. The results suggest current methods often leave behind a \emph{structured residual} rather than fully matching retraining in representation space. Geometry is useful here not as the paper’s main point, rather it reveals \emph{how} the hidden failure mode is organised. Additional subspace presentations are provided in Appendix~\ref{appendix:subspace_analysis} for further illustration. 

\subsection{Same Structured Mismatch Persists across Scale}
\label{sec: scaling}
We now test whether the diagnosed mismatch is a narrow artifact of one benchmark setting or a stable property of current unlearning behavior. Specifically, we rule out four weaker explanations, \ie~easy datasets, underpowered models, convolutional backbones, and favorable class or seed choices. 

\paragraph{Dataset complexity.}
A first weak explanation is that the diagnosed contradiction is merely a byproduct of an easy benchmark. To test this, we increase dataset complexity from CIFAR-10 to CIFAR-100 and TinyImageNet. If the gap appeared only in simple datasets, it would be much less convincing as a broader weakness of current unlearning methods. Empirically, however, the same pattern remains. The $\text{Acc}_u$ stays near zero while representation similarity to retraining remains substantially below the retrained reference (\fig~\ref{fig:dataset_scaling}). This shows that the paper’s central contradiction persists as task complexity increases, where models can continue to look forgotten at the output layer while remaining retraining-inconsistent in the representation space.

\paragraph{Model size.}
A second weak explanation is that the hidden failure mode is simply a small-model artifact. To test this, we increase model capacity from ResNet-18 to ResNet-50 on CIFAR-100. If larger backbones could realize retraining-consistent forgetting more easily, the mismatch might disappear with scale. Instead, although overall representation quality improves, the mismatch remains, and the residual discrepancy becomes more concentrated along retraining-related directions (\fig~\ref{fig:model_scaling}). Scale therefore does not wash out the failure mode; if anything, it sharpens its structure.

\paragraph{Architecture.}
A third weak explanation is that the diagnosed mismatch is tied to one specific representation mechanism. To test this, we replace the backbone with ViT-Tiny. This matters because convolutional backbones and vision transformers differ substantially in their inductive biases, feature formation, and representation geometries. If the same gap only appeared in ResNets, the diagnosis could be dismissed as a backbone-specific artifact rather than a broader issue in current unlearning methods. Empirically, the same direction/geometry gap persists on ViT-Tiny (\fig~\ref{fig:cross_backbone}), showing that the failure mode is not merely a ResNet- or convolution-specific phenomenon, but survives a substantial change in architecture.

\paragraph{Forget class and seed.}
A fourth weak explanation is that the observed pattern is contingent on one favorable class choice or one lucky training run. To test this, we vary the forgotten class and the random seed. If so, the diagnosed mismatch would be better understood as a fragile empirical coincidence than as a stable property of current unlearning behavior. Instead, the same forget/retain asymmetry recurs across classes and seeds with low variance (\fig~\ref{fig:multi_forget_class} and~\ref{fig:seed_robustness}), indicating that the diagnosed mismatch is neither class-specific nor a stochastic accident.

Taken together, these experiments rule out the most immediate narrow explanations of the observed gap. What persists across scale is not merely a metric discrepancy, but the same structured mismatch relative to retraining. In other words, the output-level success remains too optimistic, while forget/retain asymmetry, directional inconsistency, and concentrated residual discrepancy repeatedly reappear. This strengthens the claim that the hidden failure mode identified here reflects a broader weakness of current \unlearning~evaluation and algorithms, rather than a narrow artifact of one dataset, backbone, or training run. Extended supporting scaling results are provided in Appendix~\ref{appendix:extended_scaling}.
\section{Related Work}
\label{sec:related_work}

\paragraph{Existing success signals for machine unlearning.}
Most \unlearning~methods are designed and evaluated through \emph{output-level forgetting} where successful unlearning is associated with low forget-set accuracy, reduced output-level membership inference, and preserved retain accuracy~\cite{chen2023boundary,graves2021amnesiac,foster2024fast,cao2015towards,bourtoule2021machine,xu2024machine,ginart2019making,chundawat2023can,thudi2022unrolling}. This output-centered view is natural because many methods manipulate decision boundaries, output distributions, or forget-set predictions. As a result, the dominant evaluation framework in the literature focuses on output behavior, commonly through forget accuracy and logit-level MIA~\cite{chen2023boundary,tarun2023fast,foster2024fast,chundawat2023can,shokri2017membership}. However, these signals only indicate whether the model \emph{appears} to forget at the prediction layer. They do not establish whether forgotten information has actually been removed from the model’s internal representation. Our work builds on this limitation, not by arguing that output-level metrics are useless, but by showing that they can be systematically too weak as success signals.

\paragraph{Representation-level unlearning and representation-aware evaluation.}
A growing body of work moves beyond output suppression and operates directly on the feature (\ie~representation) space to suppress forget information while preserving retained knowledge~\cite{le2025pour,kodge2024deep,lee2025esc,hoang2024learn,almudevar2026representation,zhang2024contrastive,lee2026erase,cotogni2023duck,sepahvand2025selective}. Common strategies include projection-based removal, contrastive separation, adversarial representation erasure, and feature-space transformation. In parallel, recent evaluation protocols increasingly incorporate representation-aware diagnostics such as CKA similarity, representation-level MIA, linear probing, and mutual-information-based measures~\cite{le2025pour,lee2026erase,sepahvand2025selective,guo2022efficient,kim2026we,jeon2026information}. These works represent an important step forward by showing that output-level success does not necessarily imply representation-level forgetting, and that unlearned models can remain distinguishable from retrained ones in feature space. Our paper is aligned with this broader shift toward representation-aware analysis.

\paragraph{The hidden failure mode behind apparent forgetting.} Despite this progress, prior work mainly studies \emph{whether} unlearned representations differ from retrained ones, rather than what kind of mismatch remains once output-level success appears to be achieved. This leaves four questions insufficiently characterized, namely (i) whether current methods follow the same transformation as retraining, (ii) whether mismatch differs across forget and retain samples, (iii) whether residual discrepancy is diffuse or directionally organized, and finally (iv) whether the same pattern persists across scale. These questions motivate our paper.

Our contribution is therefore not another representation metric, but a diagnostic analysis of a hidden failure mode in current \unlearning~evaluation and algorithms. We show that standard output-level evaluation can systematically overestimate successful forgetting, while a stronger retraining-consistent representation lens reveals that many methods achieve only \emph{apparent forgetting}, leaving behind a structured residual in representation space.
\section{Discussion}
\label{sec:failure_mode}

\subsection{A Hidden Failure Mode of Current Machine Unlearning}

\textbf{Retraining reveals what output forgetting hides.} Concretely, our results show that current \unlearning~exhibits a hidden failure mode that standard output-level evaluation systematically fails to detect. First, standard output-level success signals are too weak as methods can attain low forget-set accuracy, low logit-level membership inference, and strong retain accuracy while remaining substantially inconsistent with retraining in representation space. In this sense, current evaluation can mistake \emph{apparent forgetting} for successful forgetting. Second, this failure is structured rather than random. Under a retraining-consistent representation lens, current methods often partially align with retraining on forget samples, diverge on retain samples, and leave concentrated residual discrepancy rather than diffuse error. Extended discussion is provided in Appendix~\ref{appendix:extended_failure_mode}.


\subsection{Implications for Evaluation}
The first implication is for evaluation practice. Output-level metrics remain useful, especially in black-box settings, because they measure prediction-layer behavior and output-level leakage. But our results show that they are too weak to serve as the sole success signal when the goal is to remove the influence of the forget set from the model. A method can satisfy standard output-level criteria while remaining substantially inconsistent with retraining in representation space. This suggests a stricter evaluation principle. Unlearning should be assessed not only by whether the model \emph{looks} forgotten at the output layer, but also by whether it remains consistent with retraining under a stronger representation-level lens. Under this view, the key question is no longer just whether forget-set accuracy is low or output-level MIA is reduced, but whether the unlearned model changes in a way that resembles retraining without the forget data.

\subsection{Implications for Method Design}
The second implication is for algorithm design. Many current methods are optimized to suppress outputs, reduce forget-set confidence, or weaken output-level membership signals. Our results suggest that this is not enough. A method can satisfy these objectives while leaving behind asymmetric and structured retraining-inconsistent residuals in representation space. This does not imply that current methods are useless, nor that all of them fail in the same way. Rather, it suggests that many current methods are better understood as optimizing for \emph{apparent forgetting} rather than retraining-consistent forgetting. Future unlearning methods should therefore be designed against a stronger target that not only reduces output-level traces of the forget set but also reduces structured residual discrepancy and better approximates retraining-consistent transformations.

\subsection{Implications for Community Practice}
The broader message of this paper is not that geometry, CKA, or representation-level MIA are themselves the main contribution. They are diagnostic tools whose value lies in revealing a hidden failure mode that standard evaluation leaves unseen. More broadly, the most important structure in unlearning is not fully visible at the output layer. Output-level metrics remain useful, but they are insufficient to determine whether unlearning actually follows retraining in the representation space. In this sense, \emph{retraining reveals what output forgetting hides}, \ie~not just that residual discrepancy remains, but how that discrepancy is organized.

Seen this way, the central issue is not merely that some metrics are incomplete. It is the field’s default practice that can certify successful unlearning on the basis of a signal that is too weak to measure the goal it claims to measure. Addressing this gap will require progress not only in how \unlearning~is evaluated, but also in how its objectives are defined and how its methods are justified.
\section{Conclusion}
\label{sec:conclusion}

This paper presents a diagnostic analysis of a hidden failure mode in current \unlearning. Standard output-level evaluations can systematically overestimate successful forgetting, where methods that appear successful under forget accuracy, output-level membership inference, and retain accuracy often remain retraining-inconsistent in representation space. Using retraining-consistent representation forgetting as a stronger evaluative lens, we show that this discrepancy is structured rather than random. Current methods often partially align with retraining on forget samples, remain more inconsistent on retain samples, and leave concentrated residual discrepancy rather than closely matching retraining in representation space. The implication is not that output-level metrics are useless, but that they are too weak to serve as the sole success signal for unlearning. If the goal is to remove the influence of the forget set from the model, then success must be judged not only by what the model suppresses at the output layer, but also by whether it remains consistent with retraining in representation space. Retraining reveals what output forgetting hides.
\newpage
{
    \small
    \bibliographystyle{unsrtnat}
    \bibliography{references}
}

\newpage
\appendix

\section{Proof of Theorem~\ref{theo:forget_discrepancy}}
\label{appendix:proof_main_theorem}

\setcounter{theorem}{0}
\setcounter{proposition}{0}

Definition~2 motivates a retraining-consistent representation lens because output agreement alone does not determine the internal state of a model. A model may match the retrained model's predictions while retaining a different representation geometry or following a different representation transformation from the original model. Thus, output forgetting can certify apparent prediction-level behavior, but it cannot by itself certify that the model has moved toward the retrained solution in representation space. The following theorem formalizes this limitation. \theo~\ref{theo:forget_discrepancy} is modest and only serves to justify why output-level alignment alone cannot certify retraining-consistent forgetting.

\begin{theorem}[Output Forgetting Does Not Imply Retraining-Consistent Representation Forgetting]
Let $f_\theta(x)=g_\theta(h_\theta(x))$ be a classifier composed of a feature extractor $h_\theta:\mathcal{X}\to\mathbb{R}^d$ and a prediction head $g_\theta$. There exists an unlearned model $\theta_u$ such that
\begin{equation}
f_{\theta_u}(x)=f_{\theta_r}(x), \qquad \forall x\in\mathcal{D},
\end{equation}
while
\begin{equation}
\mathbb{P}_{\theta_u}^{S}(\mathcal{H}) \neq \mathbb{P}_{\theta_r}^{S}(\mathcal{H})
\end{equation}
for at least one $S\in\{\mathcal{D}_u,\mathcal{D}_r\}$. Hence, perfect output alignment with retraining does not imply representation-level consistency with retraining.
\end{theorem}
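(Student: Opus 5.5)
The plan is a direct construction: take the retrained model $\theta_r$ and reparametrize it by an invertible map inserted between the feature extractor and the head. This leaves the composite function $f$ unchanged but moves the penultimate representation. Concretely, we assume the parametric family is rich enough to express a feature extractor $h_{\theta_u}=A\circ h_{\theta_r}$ and a head $g_{\theta_u}=g_{\theta_r}\circ A^{-1}$ for some bijection $A:\mathbb{R}^d\to\mathbb{R}^d$. This holds, for example, when the last layer of $h$ and the first layer of $g$ are linear or affine, since an invertible matrix can be absorbed into each. The simplest choice is a translation $A(z)=z+c$ with $c\neq 0$, which a bias term realizes.

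The first step is to verify output agreement. For every $x\in\mathcal{D}$ we have
$$f_{\theta_u}(x)=g_{\theta_r}\bigl(A^{-1}(A(h_{\theta_r}(x)))\bigr)=f_{\theta_r}(x).$$
Since this identity holds pointwise on all inputs, the output distributions $\mathbb{P}^S_{\theta_u}(\mathcal{F})$ and $\mathbb{P}^S_{\theta_r}(\mathcal{F})$ coincide exactly for both $S\in\{\mathcal{D}_u,\mathcal{D}_r\}$. Output forgetting in the sense of Definition~\ref{def:output_forgetting} therefore holds with equality.

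The second step is to show that the representation distributions differ. With $A(z)=z+c$, the empirical distribution $\mathbb{P}^S_{\theta_u}(\mathcal{H})$ is the pushforward of $\mathbb{P}^S_{\theta_r}(\mathcal{H})$ under translation by $c$. Its empirical mean is
$$\frac{1}{|S|}\sum_{x\in S}h_{\theta_r}(x)+c,$$
which differs from the mean of $\mathbb{P}^S_{\theta_r}(\mathcal{H})$ whenever $c\neq 0$. Because the means differ, the two distributions are distinct for any nonempty $S$, and in particular for $S=\mathcal{D}_u$. The same shift also produces a mismatch in the transformation condition of Definition~\ref{def:rep_forgetting}: the shift $\mathcal{T}^S_{\theta_o\to\theta_u}$ equals $\mathcal{T}^S_{\theta_o\to\theta_r}$ translated by $c$.

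The main obstacle is justifying that $\theta_u$ is an admissible parameter, i.e. that the architecture can realize the absorbed map $A$, since the theorem asserts existence within the model class. I would state this as a mild assumption: the last layer of $h$ has a bias, or the first layer of $g$ is affine. I would also note the alternative of an orthogonal rotation $A=Q$, which handles bias-free linear layers via $W\mapsto QW$ in $h$ and $V\mapsto VQ^\top$ in $g$. For the rotation, I need to check that $Q$ moves at least one representation point. This holds whenever some $h_{\theta_r}(x)\neq 0$, by choosing $Q$ not fixing it. A distinct multiset of points then gives a distinct empirical distribution, provided the rotation does not merely permute the points; picking $Q$ generically avoids this. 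If the trivial case where all features are zero arises, the translation construction covers it.
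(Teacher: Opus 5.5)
Your construction is correct, but it rests on a different mechanism from the paper's. The paper keeps the head fixed, $g_{\theta_u}=g_{\theta_r}=W$, and assumes $\ker W\neq\{0\}$. It perturbs representations only on the forget set, $h_{\theta_u}(x)=h_{\theta_r}(x)+z_0\mathbf{1}[x\in\mathcal{D}_u]$ with $0\neq z_0\in\ker W$, so output agreement comes from the head being non-injective. You instead use the freedom $(h,g)\mapsto(A\circ h,\,g\circ A^{-1})$ to trade an invertible map between extractor and head.

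\emph{What your route buys.} It needs no kernel condition, and it gives $f_{\theta_u}=f_{\theta_r}$ on all of $\mathcal{X}$ rather than only on $\mathcal{D}$. Your translation breaks both conditions of Definition~\ref{def:rep_forgetting} on both $\mathcal{D}_u$ and $\mathcal{D}_r$. It also makes admissibility of $\theta_u$ explicit through a bias or weight edit. The paper, by contrast, tacitly assumes the extractor can implement an input-dependent, set-indicator perturbation of $h_{\theta_r}$.

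\emph{What it costs.} Your discrepancy is pure reparametrization: $\theta_u$ and $\theta_r$ compute the same function, and forget and retain samples are treated identically. Symmetry-invariant diagnostics would not register it, since linear CKA is unchanged by an orthogonal $A$ and its centered version by a translation. So your argument really shows that Definition~\ref{def:rep_forgetting} is not invariant under function-preserving symmetries. The paper's construction instead models the phenomenon it cares about: a forget-specific offset hidden in the head's null space, with retain representations untouched. That is also what Proposition~\ref{prop:linear_visible_residual} builds on.

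\emph{On the admissibility point you flagged.} If the representation is taken after a ReLU, as in the paper's ResNets, neither a translation nor a rotation can generally be absorbed into the parameters. A positive rescaling $A=\mathrm{diag}(s)$ with $s>0$ can be. In the simplest case, $\mathrm{diag}(s)\,\mathrm{ReLU}(Vz+b)=\mathrm{ReLU}(\mathrm{diag}(s)Vz+\mathrm{diag}(s)b)$, and the head absorbs $\mathrm{diag}(s)^{-1}$. Because post-ReLU features are nonnegative, taking $s_j\neq 1$ on any coordinate not identically zero on $S$ already changes the empirical mean.
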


\begin{proof}
Consider a linear prediction head $g_\theta(h)=Wh$, where $W\in\mathbb{R}^{k\times d}$, and assume $\ker(W)$ is non-trivial. Let $z_0\in\ker(W)$ with $z_0\neq 0$, and define
\begin{equation}
h_{\theta_u}(x)=
\begin{cases}
h_{\theta_r}(x)+z_0, & x\in\mathcal{D}_u,\\
h_{\theta_r}(x), & x\in\mathcal{D}_r.
\end{cases}
\end{equation}
Then for any $x\in\mathcal{D}$,
\begin{equation}
f_{\theta_u}(x)=Wh_{\theta_u}(x)=Wh_{\theta_r}(x)+Wz_0=Wh_{\theta_r}(x)=f_{\theta_r}(x),
\end{equation}
so the two models are identical in output space on $\mathcal{D}$. However, because $z_0\neq 0$, the representation distribution on $\mathcal{D}_u$ is shifted, and thus
\begin{equation}
\mathbb{P}_{\theta_u}^{\mathcal{D}_u}(\mathcal{H}) \neq \mathbb{P}_{\theta_r}^{\mathcal{D}_u}(\mathcal{H}).
\end{equation}
Therefore, output-level alignment does not guarantee retraining-consistent representation forgetting.
\end{proof}

\theo~\ref{theo:forget_discrepancy} shows that a model can appear to forget at the output layer while still remaining inconsistent with retraining in representation space. This is precisely why standard output-level evaluation can overestimate successful unlearning.

\paragraph{A stronger operational variant.}
Theorem~\ref{theo:forget_discrepancy} shows that output agreement does not imply representation agreement. The following proposition gives an operational form of this argument, showing that residual discrepancy may remain visible after a linear projection in representation space even when outputs match.

\begin{proposition}[Output Agreement Does Not Rule Out Linearly Visible Representation-Level Residuals]
\label{prop:linear_visible_residual}
There exist an unlearned model $\theta_u$ and a retrained model $\theta_r$ such that
\begin{equation}
f_{\theta_u}(x)=f_{\theta_r}(x), \qquad \forall x\in\mathcal{D},
\end{equation}
while there exists a linear functional $\ell(h)=w^\top h+b$ for which the induced one-dimensional distributions of $\ell(h_{\theta_u}(x))$ and $\ell(h_{\theta_r}(x))$ differ on $\mathcal{D}_u$. Hence, exact output agreement does not preclude linearly visible residual discrepancy in representation space.
\end{proposition}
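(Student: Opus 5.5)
I would reuse the kernel construction from the proof of Theorem~\ref{theo:forget_discrepancy} and add one observation: a vector in the kernel of the head is still visible to a linear functional chosen along that vector. The construction is as follows. Take a linear head $g(h)=Wh$ with $W\in\mathbb{R}^{k\times d}$ and $d>k$, so that $\ker(W)\neq\{0\}$. Let $\theta_r$ be any retrained model with feature map $h_{\theta_r}$, and pick $z_0\in\ker(W)\setminus\{0\}$. Define $\theta_u$ to share the head $W$ and to have feature map $h_{\theta_u}(x)=h_{\theta_r}(x)+z_0$ for $x\in\mathcal{D}_u$ and $h_{\theta_u}(x)=h_{\theta_r}(x)$ for $x\in\mathcal{D}_r$. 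The output agreement $f_{\theta_u}(x)=Wh_{\theta_r}(x)+Wz_0=f_{\theta_r}(x)$ on all of $\mathcal{D}$ then follows verbatim from the earlier proof.

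For the linear functional I would choose $w=z_0$ and $b=0$, so $\ell(h)=z_0^\top h$. For every $x\in\mathcal{D}_u$,
\[
\ell(h_{\theta_u}(x))=\ell(h_{\theta_r}(x))+\|z_0\|^2 .
\]
Hence the empirical one-dimensional distribution of $\ell(h_{\theta_u}(x))$ over $\mathcal{D}_u$ is exactly the distribution of $\ell(h_{\theta_r}(x))$ over $\mathcal{D}_u$ translated by $c=\|z_0\|^2>0$.

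It remains to show that a nontrivial translation of a finite empirical distribution cannot equal the original. I would compare means: since $\mathcal{D}_u$ is finite and nonempty, the mean of the shifted distribution exceeds the original mean by $c\neq 0$, so the two distributions differ. Alternatively, one can compare maxima of the finite supports, which also increase by exactly $c$.

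I do not expect any step to be a real obstacle. The only points needing care are the explicit assumptions that $d>k$ (or, more generally, that $\ker(W)$ is nontrivial) and that $\mathcal{D}_u$ is nonempty. I would also remark that $z_0\perp\mathrm{row}(W)$, so $\ell$ is invisible to the head yet detectable by a linear probe. This matches the operational reading of $\mathrm{MIA}_{\mathrm{rep}}$ and the linear-probe diagnostics.
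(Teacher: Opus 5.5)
Your proposal is correct and matches the paper's proof: both reuse the $\ker(W)$ construction, take a $w$ with $w^\top z_0\neq 0$ (your $w=z_0$, $b=0$ is a special case), and conclude from the nonzero constant shift that the one-dimensional distributions on $\mathcal{D}_u$ differ. Your mean-comparison argument and the explicit assumptions $\ker(W)\neq\{0\}$ and $\mathcal{D}_u\neq\emptyset$ make precise steps the paper leaves implicit.
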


\begin{proof}[Proof]
Use the same construction as in \theo~\ref{theo:forget_discrepancy}. Let
\[
h_{\theta_u}(x)=
\begin{cases}
h_{\theta_r}(x)+z_0, & x\in\mathcal{D}_u,\\
h_{\theta_r}(x), & x\in\mathcal{D}_r,
\end{cases}
\]
where $z_0\neq 0$ lies in $\ker(W)$ for the linear prediction head $g_\theta(h)=Wh$. Then
\[
f_{\theta_u}(x)=f_{\theta_r}(x), \qquad \forall x\in\mathcal{D},
\]
exactly as in \theo~\ref{theo:forget_discrepancy}.

Now choose any $w\in\mathbb{R}^d$ such that $w^\top z_0\neq 0$; such a vector exists because $z_0\neq 0$. Define the linear functional
\[
\ell(h)=w^\top h+b
\]
for any scalar $b$. Then for every $x\in\mathcal{D}_u$,
\[
\ell(h_{\theta_u}(x))
=
w^\top(h_{\theta_r}(x)+z_0)+b
=
\ell(h_{\theta_r}(x)) + w^\top z_0.
\]
Since $w^\top z_0\neq 0$, the projected representations on $\mathcal{D}_u$ differ by a nonzero constant shift, and therefore their induced one-dimensional distributions are not equal. Hence, exact output agreement does not preclude residual discrepancy that remains visible after a linear projection.
\end{proof}

\section{Metric Robustness of the Leakage Diagnosis}
\label{appendix:metric_behavior}

A natural concern is that the observed gap between output-level and representation-level leakage may depend on the particular MIA configuration used. Since our main analysis compares forgetting across both levels, the diagnosis is only meaningful if the chosen attack is valid and discriminative at each level. This section, therefore, examines representative MIA configurations from prior work to test whether the main leakage pattern is stable across attack designs and to justify the configuration used in the paper.

\subsection{Alternative MIA Configurations}

\unlearning~evaluations use MIA implementations that differ in both the membership signal and the attack model. To assess whether our leakage diagnosis depends on these design choices, we compare three representative configurations from prior work at both logit and representation levels. The configurations are summarized in \tab~\ref{tab:mia_configs}, named after the unlearning methods that introduced them.

\begin{table}[ht]
    \centering
    \resizebox{0.9\linewidth}{!}{
    \begin{tabular}{l|ccc}
    \toprule
    \textbf{Differences} & \textbf{BadT MIA} & \textbf{POUR MIA} & \textbf{SURE MIA} \\
    \midrule
    \textbf{Membership Signal} 
      & \makecell{Retain set (Member) \\ Test set (Non-member)} 
      & \makecell{Train set (Member) \\ Test set (Non-member)} 
      & \makecell{Train set (Member) \\ Test set (Non-member)} \\[6pt]
    \textbf{Attack Model}      
      & Logistic Regression 
      & Logistic Regression 
      & $k$-Nearest Neighbour \\[6pt]
    \textbf{Literature}           
      & \makecell{Bad Teacher~\cite{chundawat2023can} \\ SSD~\cite{foster2024fast}} 
      & POUR~\cite{le2025pour} 
      & SURE~\cite{sepahvand2025selective} \\
    \bottomrule
    \end{tabular}}
    \caption{Comparison of alternative MIA configurations.}
    \label{tab:mia_configs}
    \vspace{-10pt}
\end{table}

A logistic regression-based MIA assumes that member and non-member samples can be separated by a single hyperplane. It is therefore sensitive to global linear separability in the feature space. In contrast, a $k$-nearest neighbors ($k$NN) MIA is sensitive to local neighborhood structure and captures cluster-level memorization. Across both attack models, membership signals can also be defined differently. A Retain (Member) vs. Test (Non-member) setting evaluates whether forget samples behave similarly to retained training data, while a Train (Member) vs. Test (Non-member) setting evaluates whether forget samples still resemble the overall training distribution. These alternatives provide complementary views of how membership information may persist after unlearning.

For each configuration, we implement logit-level and representation-level membership inference using the classifier output entropy and the post-average-pooling representation, respectively. We then assess whether the resulting attacks provide a stable and meaningful basis for comparing leakage across the two levels.

\subsection{Robustness of the Leakage Pattern Across MIA Configurations}

\begin{figure}[t]
    \centering
        \includegraphics[width=\textwidth]{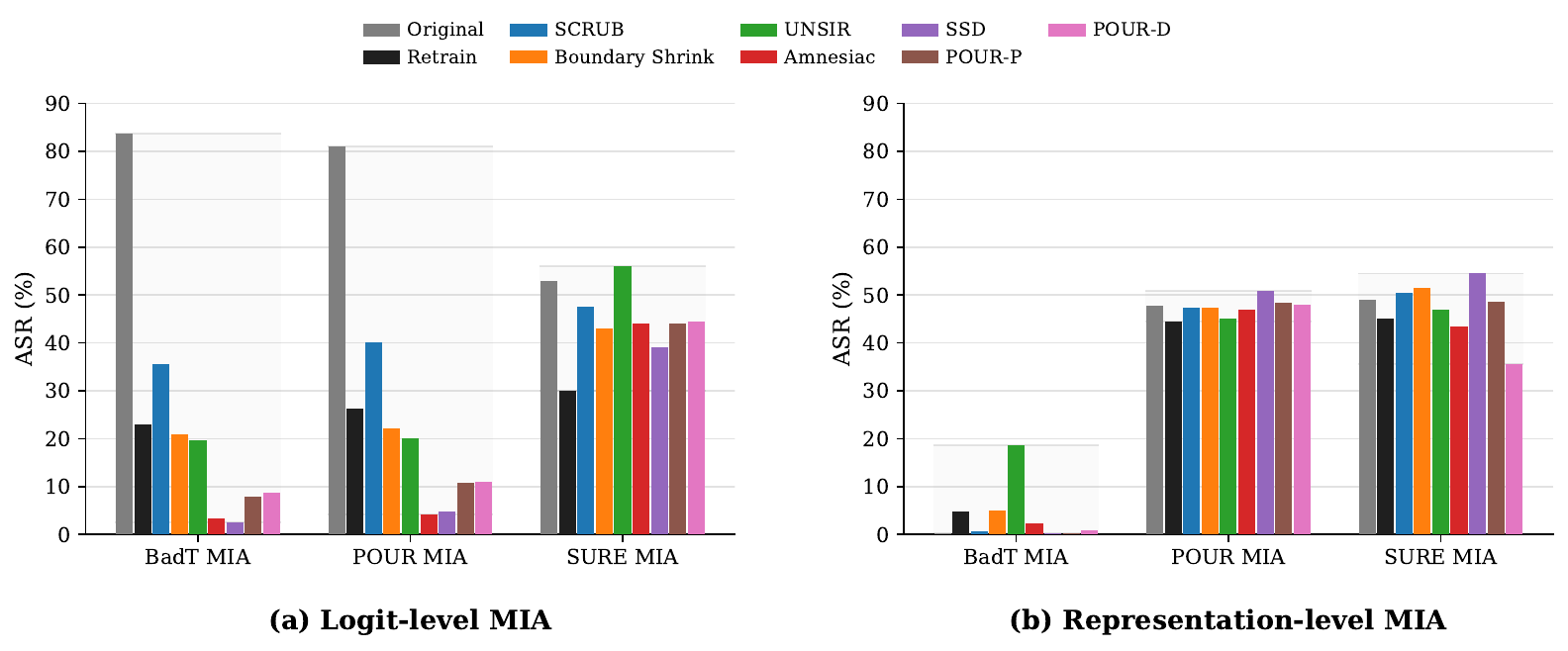}
    \caption{ASR across MIA configurations on CIFAR-10 with ResNet-18. Lower ASR indicates less membership leakage. 
At the logit level, SURE MIA compresses the large method-dependent differences seen under BadT MIA and POUR MIA. 
At the representation level, BadT MIA yields uniformly low ASR, including for the original model.}
    \label{fig:both}
\end{figure}

As to \fig~\ref{fig:both}, at the logit level, BadT MIA and POUR MIA produce similar trends across unlearning methods, with values spread over a larger range, whereas SURE MIA yields values that are more compressed across methods. Since POUR MIA and SURE MIA share the same membership signal but differ only in attack model, this compression suggests that the $k$NN classifier is less discriminative than logistic regression on output entropy. This makes BadT MIA and POUR MIA better suited for capturing leakage at the logit level.

Meanwhile, at the representation level, \fig~\ref{fig:both} shows POUR MIA and SURE MIA produce much closer values to each other across all unlearning methods, whereas BadT MIA yields uniformly low values, including for the original model. This behaviour is plausibly explained by its membership signal, where the attack model is trained on the retain set as members and the test set as non-members. Under class-level unlearning, the forget set is absent from the member set, so the attack can become sensitive to class-distribution similarity rather than true training membership. This interpretation is supported by the observation that the retrained model achieves a higher BadT MIA than the original model, contrary to the expected trend. In the retrained model, the representation of the forget class collapses toward other classes and becomes harder to distinguish from the retain-set distribution, causing the attack model to identify it as a member. By contrast, in the original model, the forget-class representation forms a more distinct cluster from the retain classes, so the attack model is more likely to identify it as a non-member.

Taken together, these observations (\fig~\ref{fig:both}) support the use of a train-vs.-test membership signal with logistic regression, such as POUR MIA, as the most suitable diagnostic for comparing leakage across logit and representation levels in our setting.More importantly, they show that the main qualitative picture is not an artifact of a single attack design. While the MIA configuration affects sensitivity and interpretation, disciplined representation-level leakage diagnosis remains necessary.

\subsection{Implementation Verification Details}

We implement the alternative MIA configurations according to the membership signal and attack model specified in \tab~\ref{tab:mia_configs}. The purpose of this subsection is to ensure that differences across configurations are not artifacts of preprocessing or attack implementation.

In all cases, we apply balanced class weights, standard feature normalization fitted on the training split, and stratified subsampling of the training set to match the test-set size. The subsampling is stratified over the joint class and membership label so that the forget class is preserved during resampling. For logit-level MIA, the membership signal is the prediction entropy of the model’s output distribution. For representation-level MIA, features are extracted after average pooling and then flattened. Both entropy and representations are extracted without augmentation at inference time, ensuring that cross-model comparisons are based on deterministic outputs.

\section{Experimental Details}
\label{appendix:experimental_details}

This appendix provides full experimental details supporting the results reported in the main paper. We describe the datasets used across all settings (\tab~\ref{tab:dataset_details}), the hyperparameters used to train the original and retrained baseline models (\tab~\ref{tab:baseline_hparams}), and the per-method unlearning hyperparameters for each of the five dataset / architecture combinations studied: CIFAR-10 / ResNet-18, CIFAR-100 / ResNet-18, CIFAR-100 / ResNet-50, TinyImageNet / ResNet-18, and CIFAR-100 / ViT-Tiny (\tab~\ref{tab:hparams_scrub},~\ref{tab:hparams_amnesiac},~\ref{tab:hparams_boundary},~\ref{tab:hparams_unsir},~\ref{tab:hparams_ssd},~\ref{tab:hparams_pourd}). Dashes (---) indicate the entry is not applicable for that setting. All experiments were conducted on NVIDIA A100 GPUs.

\begin{table}[H]
\centering
\small
\renewcommand{\arraystretch}{1.1}
\begin{tabular}{l|ccc}
\toprule
\textbf{Dataset Statistics} & \textbf{CIFAR-10} & \textbf{CIFAR-100} & \textbf{TinyImageNet} \\
\midrule
Number of classes    & 10     & 100    & 200     \\
Training images      & 50,000 & 50,000 & 100,000 \\
Validation images    & 10,000 & 10,000 & 10,000  \\
Forget-class images  & 5,000  & 500    & 500     \\
Image size           & $32 \times 32$ & $32 \times 32$ & $64 \times 64$ \\
\bottomrule
\end{tabular}
\caption{Dataset statistics for CIFAR-10, CIFAR-100 and TinyImageNet.}
\label{tab:dataset_details}
\end{table}

\begin{table}[H]
\centering
\small
\renewcommand{\arraystretch}{1.2}
\resizebox{\textwidth}{!}{%
\begin{tabular}{l|ccccc}
\toprule
\textbf{Hyperparameter}
  & \makecell{\textbf{CIFAR-10} \\ \textbf{ResNet-18}}
  & \makecell{\textbf{CIFAR-100} \\ \textbf{ResNet-18}}
  & \makecell{\textbf{CIFAR-100} \\ \textbf{ResNet-50}}
  & \makecell{\textbf{TinyImageNet} \\ \textbf{ResNet-18}}
  & \makecell{\textbf{CIFAR-100} \\ \textbf{ViT-Tiny}} \\
\midrule
Epochs              & 50    & 200   & 200   & 200   & 300   \\
Optimizer           & SGD   & SGD   & SGD   & SGD   & AdamW \\
Batch size          & 128   & 128   & 128   & 256   & 256   \\
Learning rate       & 0.01  & 0.1   & 0.1   & 0.01  & $3\times10^{-4}$ \\
Momentum            & 0.9   & 0.9   & 0.9   & 0.9   & ---   \\
Weight decay        & $5\times10^{-4}$ & $2.5\times10^{-3}$ & $5\times10^{-4}$ & $5\times10^{-4}$ & 0.07 \\
Label smoothing     & 0     & 0     & 0     & 0     & 0.1   \\
Scheduler           & ---   & \makecell{ReduceLR\\OnPlateau} & \makecell{ReduceLR\\OnPlateau} & \makecell{ReduceLR\\OnPlateau} & \makecell{CosineAnnealing\\LR} \\
Scheduler LR factor & ---   & 0.1   & 0.1   & 0.1   & ---   \\
Scheduler patience  & ---   & 5     & 5     & 5     & ---   \\
Min LR              & ---   & ---   & ---   & ---   & $10^{-6}$ \\
Warmup epochs       & ---   & ---   & ---   & ---   & 10    \\
Early stopping      & \checkmark & \checkmark & \checkmark & \checkmark & ---  \\
Pretrained weight   & $^\dagger$ & --- & --- & \checkmark & --- \\
\bottomrule
\end{tabular}}
\caption{Hyperparameters for original and retrained baseline training. $^\dagger$CIFAR-10 / ResNet-18 was initialised from weights pretrained on CIFAR-100 for 30 epochs at learning rate 0.1.}
\label{tab:baseline_hparams}
\end{table}

\begin{table}[H]
\centering
\small
\renewcommand{\arraystretch}{1.2}
\resizebox{\textwidth}{!}{%
\begin{tabular}{l|ccccc}
\toprule
\textbf{Hyperparameter}
  & \makecell{\textbf{CIFAR-10} \\ \textbf{ResNet-18}}
  & \makecell{\textbf{CIFAR-100} \\ \textbf{ResNet-18}}
  & \makecell{\textbf{CIFAR-100} \\ \textbf{ResNet-50}}
  & \makecell{\textbf{TinyImageNet} \\ \textbf{ResNet-18}}
  & \makecell{\textbf{CIFAR-100} \\ \textbf{ViT-Tiny}} \\
\midrule
Gamma           & 0.99  & 1     & 1     & 1     & 1     \\
Alpha           & 0.001 & 0.5 & 0.5 & 0.5 & 0.5   \\
Max steps       & 2     & 5     & 5     & 5     & 5     \\
Min steps       & 3     & 5     & 5     & 4     & 5     \\
Optimizer       & SGD   & Adam  & Adam  & Adam  & Adam  \\
Learning rate   & $5\times10^{-4}$ & $5\times10^{-4}$ & $5\times10^{-4}$ & $5\times10^{-4}$ & $5\times10^{-4}$ \\
LR decay rate   & 0.1 & 0.1 & 0.1 & 0.1 & 0.1 \\
LR decay epochs & {[}3, 5, 9{]} & {[}2{]} & {[}2{]} & {[}2{]} & {[}2{]} \\
Momentum        & 0.9   & ---   & ---   & ---   & ---   \\
Weight decay    & $5\times10^{-4}$ & $5\times10^{-4}$ & $5\times10^{-4}$ & $5\times10^{-4}$ & $5\times10^{-4}$ \\
\bottomrule
\end{tabular}}
\caption{Hyperparameters for SCRUB.}
\label{tab:hparams_scrub}
\end{table}

\begin{table}[H]
\centering
\small
\renewcommand{\arraystretch}{1.2}
\resizebox{\textwidth}{!}{%
\begin{tabular}{l|ccccc}
\toprule
\textbf{Hyperparameter}
  & \makecell{\textbf{CIFAR-10} \\ \textbf{ResNet-18}}
  & \makecell{\textbf{CIFAR-100} \\ \textbf{ResNet-18}}
  & \makecell{\textbf{CIFAR-100} \\ \textbf{ResNet-50}}
  & \makecell{\textbf{TinyImageNet} \\ \textbf{ResNet-18}}
  & \makecell{\textbf{CIFAR-100} \\ \textbf{ViT-Tiny}} \\
\midrule
Optimizer       & Adam  & Adam  & Adam  & Adam  & Adam  \\
Batch size      & 64    & 64    & 64    & 64    & 64    \\
Epochs          & 5     & 5     & 5     & 3     & 5     \\
Learning rate   & $10^{-3}$ & $10^{-3}$ & $10^{-3}$ & $10^{-4}$ & $10^{-3}$ \\
\bottomrule
\end{tabular}}
\caption{Hyperparameters for Amnesiac.}
\label{tab:hparams_amnesiac}
\end{table}

\begin{table}[H]
\centering
\small
\renewcommand{\arraystretch}{1.2}
\resizebox{\textwidth}{!}{%
\begin{tabular}{l|ccccc}
\toprule
\textbf{Hyperparameter}
  & \makecell{\textbf{CIFAR-10} \\ \textbf{ResNet-18}}
  & \makecell{\textbf{CIFAR-100} \\ \textbf{ResNet-18}}
  & \makecell{\textbf{CIFAR-100} \\ \textbf{ResNet-50}}
  & \makecell{\textbf{TinyImageNet} \\ \textbf{ResNet-18}}
  & \makecell{\textbf{CIFAR-100} \\ \textbf{ViT-Tiny}} \\
\midrule
Optimizer       & SGD   & SGD   & SGD   & SGD   & SGD   \\
Eps             & 0.1   & 0.05  & 0.05  & 0.1   & 0.3   \\
Poison epochs   & 10    & 10    & 10    & 5     & 10    \\
Fine-tune LR    & $10^{-5}$ & $10^{-5}$ & $10^{-5}$ & $10^{-5}$ & $10^{-4}$ \\
Momentum        & 0.9   & 0.9   & 0.9   & 0.9   & 0.9   \\
\bottomrule
\end{tabular}}
\caption{Hyperparameters for Boundary Shrink.}
\label{tab:hparams_boundary}
\end{table}

\begin{table}[H]
\centering
\small
\renewcommand{\arraystretch}{1.2}
\resizebox{\textwidth}{!}{%
\begin{tabular}{l|ccccc}
\toprule
\textbf{Hyperparameter}
  & \makecell{\textbf{CIFAR-10} \\ \textbf{ResNet-18}}
  & \makecell{\textbf{CIFAR-100} \\ \textbf{ResNet-18}}
  & \makecell{\textbf{CIFAR-100} \\ \textbf{ResNet-50}}
  & \makecell{\textbf{TinyImageNet} \\ \textbf{ResNet-18}}
  & \makecell{\textbf{CIFAR-100} \\ \textbf{ViT-Tiny}} \\
\midrule
Noise batch size          & 256   & 256   & 256   & 16    & 256   \\
Samples per retain class  & 1000  & 1000  & 1000  & 450   & 1000 \\
Noise epochs              & 40    & 40    & 40    & 20    & 40    \\
Noise LR                  & 0.1   & 0.1   & 0.1   & 0.01  & 0.1 \\
Noise $\ell_2$ lambda     & 0.1   & 0.1   & 0.1   & 0.2   & 0.1   \\
Noise copies              & 20    & 20    & 20    & 15    & 20    \\
Impair epochs             & 1     & 1     & 1     & 6     & 1     \\
Impair LR                 & 0.02  & 0.02  & 0.02  & 0.003 & $10^{-5}$ \\
Repair epochs             & 1     & 1     & 1     & 8     & 1     \\
Repair LR                 & 0.01  & 0.01  & 0.01  & 0.008 & $3\times10^{-4}$ \\
Repair weight decay       & 0     & 0     & 0     & 0     & 0.05  \\
Optimizer                 & Adam  & Adam  & Adam  & Adam  & Adam  \\
\bottomrule
\end{tabular}}
\caption{Hyperparameters for UNSIR.}
\label{tab:hparams_unsir}
\end{table}

\begin{table}[H]
\centering
\small
\renewcommand{\arraystretch}{1.2}
\resizebox{\textwidth}{!}{%
\begin{tabular}{l|ccccc}
\toprule
\textbf{Hyperparameter}
  & \makecell{\textbf{CIFAR-10} \\ \textbf{ResNet-18}}
  & \makecell{\textbf{CIFAR-100} \\ \textbf{ResNet-18}}
  & \makecell{\textbf{CIFAR-100} \\ \textbf{ResNet-50}}
  & \makecell{\textbf{TinyImageNet} \\ \textbf{ResNet-18}}
  & \makecell{\textbf{CIFAR-100} \\ \textbf{ViT-Tiny}} \\
\midrule
Lambda  & 1  & 1  & 1  & 1  & 1  \\
Alpha   & 10 & 10 & 10 & 50 & 50 \\
\bottomrule
\end{tabular}}
\caption{Hyperparameters for SSD.}
\label{tab:hparams_ssd}
\end{table}

\begin{table}[H]
\centering
\small
\renewcommand{\arraystretch}{1.2}
\resizebox{\textwidth}{!}{%
\begin{tabular}{l|ccccc}
\toprule
\textbf{Hyperparameter}
  & \makecell{\textbf{CIFAR-10} \\ \textbf{ResNet-18}}
  & \makecell{\textbf{CIFAR-100} \\ \textbf{ResNet-18}}
  & \makecell{\textbf{CIFAR-100} \\ \textbf{ResNet-50}}
  & \makecell{\textbf{TinyImageNet} \\ \textbf{ResNet-18}}
  & \makecell{\textbf{CIFAR-100} \\ \textbf{ViT-Tiny}} \\
\midrule
Epochs        & 10    & 100   & 100   & 100   & 10    \\
Optimizer     & Adam  & Adam  & Adam  & Adam  & Adam  \\
Learning rate & $10^{-4}$ & $10^{-4}$ & $10^{-4}$ & $10^{-4}$ & $10^{-4}$ \\
\bottomrule
\end{tabular}}
\caption{Hyperparameters for POUR-D.}
\label{tab:hparams_pourd}
\end{table}

\section{Additional Representation-Space Sanity Checks}
\label{appendix:linear_probe}

The main paper diagnoses representation-level forgetting through directional alignment, subspace decomposition, representation-level membership inference, and centered kernel alignment. This appendix adds two complementary sanity checks, namely linear probing and t-SNE visualization, to test whether the same gap remains visible under simpler readouts of the learned representation. The goal is not to introduce new primary metrics, but to verify that the main diagnosis is not specific to the particular representation-level diagnostics used in the paper.

\subsection{Linear Probing}

We first test whether forget-class information remains linearly recoverable from the learned representations after unlearning. For each model, we freeze the backbone and train a single linear classification layer on the full dataset $\mathcal{D}$, i.e., $\mathcal{D}_u \cup \mathcal{D}_r$, using the original class labels. We then report forget-set probe accuracy $\mathrm{Acc}_u^{\mathrm{probe}}$ and retain-set probe accuracy $\mathrm{Acc}_r^{\mathrm{probe}}$. These quantities measure, respectively, how much class-identifying structure for the forget class remains linearly accessible in the backbone representation and how well retain-class structure is preserved.

The intuition is simple. If a backbone no longer encodes forget-class features in a linearly separable way, then even a linear head trained on the original labels should fail to recover that class from $\mathcal{D}_u$. Conversely, high $\mathrm{Acc}_u^{\mathrm{probe}}$ indicates that forget-class identity remains recoverable from the representation, even if output-level forgetting appears successful. All experiments are conducted on CIFAR-10 with ResNet-18. The linear probe is trained with SGD for 10 epochs using a learning rate of $1\times10^{-3}$ across all seven unlearning methods, together with the original and retrained baselines.

\begin{table}[ht]
\centering
\scriptsize
\setlength{\tabcolsep}{3.5pt}
\renewcommand{\arraystretch}{1.05}
\begin{tabular}{l|cc|cc|p{3.95cm}}
\toprule
\multirow{2}{*}{\textbf{Method}} 
    & \multicolumn{2}{c|}{\textbf{Linear Probe}} 
    & \multicolumn{2}{c|}{\textbf{Output-level}} 
    & \multirow{2}{*}{\centering \textbf{Reading}} \\
\cmidrule(lr){2-3} \cmidrule(lr){4-5}
& $\mathrm{Acc}_u^{\mathrm{probe}}\!\downarrow$ 
& $\mathrm{Acc}_r^{\mathrm{probe}}\!\uparrow$ 
& $\mathrm{Acc}_u\!\downarrow$
& $\mathrm{Acc}_r\!\uparrow$ \\
\midrule
Original           & 98.20 & 98.11 & 98.65 & 98.35 & Original representation \\
\textbf{Retrain}   & \textbf{45.14} & \textbf{98.26} & \textbf{0.00} & \textbf{98.74} & Reference for forgetting \\
\midrule
SCRUB              & 46.02 & 98.08 & 0.00 & 98.98 & Closest to retrain under probe \\
Boundary Shrink    & 49.38 & 85.19 & 13.55 & 84.27 & Reduced recoverability, damaged retain \\
UNSIR              & 54.18 & 23.91 & 0.00 & 27.55 & Destructive forgetting \\
Amnesiac           & 74.68 & 97.45 & 0.00 & 97.87 & Output forgotten, probe recoverable \\
SSD                & 70.24 & 97.37 & 0.00 & 97.39 & Output forgotten, probe recoverable \\
POUR-P             & 83.38 & 98.10 & 0.00 & 98.36 & Output forgotten, probe recoverable \\
POUR-D             & 70.24 & 96.50 & 0.02 & 96.90 & Output forgotten, probe recoverable \\
\bottomrule
\end{tabular}
\caption{Linear probe class-recovery accuracy and output-level accuracy on CIFAR-10 with ResNet-18. Low output forget accuracy alone does not imply representation behavior close to retraining.}
\label{tab:linear_probe}
\vspace{-8pt}
\end{table}

Table~\ref{tab:linear_probe} supports the main paper's diagnosis. The original model attains near-perfect $\mathrm{Acc}_u^{\mathrm{probe}}$ ($98.2\%$), confirming that forget-class identity is fully encoded in its representation. In contrast, the retrained model achieves only $45.1\%$, despite the probe being trained on the original class labels. This indicates that retraining no longer maintains the forget class as linearly separable in the backbone representation.

Among methods that achieve complete output-level forgetting, i.e., $\mathrm{Acc}_u=0$, several still retain strong linear recoverability of the forget class. In particular, POUR-P ($83.4\%$), Amnesiac ($74.7\%$), SSD ($70.2\%$), and POUR-D ($70.2\%$) all remain far above retraining in $\mathrm{Acc}_u^{\mathrm{probe}}$. Thus, although these methods appear successful under output-level evaluation, forget-class identity remains linearly accessible in their representations.

Boundary Shrink and UNSIR produce $\mathrm{Acc}_u^{\mathrm{probe}}$ values closer to retraining ($49.4\%$ and $54.2\%$, respectively), but this comes with severe degradation in $\mathrm{Acc}_r^{\mathrm{probe}}$ ($85.2\%$ and $23.9\%$), mirrored by their poor $\mathrm{Acc}_r$. This pattern is more consistent with broad representational disruption than with selective forgetting.

SCRUB is the strongest case under linear probing. Its $\mathrm{Acc}_u^{\mathrm{probe}}$ ($46.0\%$) is close to retraining, while $\mathrm{Acc}_r^{\mathrm{probe}}$ ($98.1\%$) remains essentially fully preserved. Under this probe-based view, SCRUB appears close to retraining. However, the main paper shows that this agreement is incomplete, with SCRUB remaining directionally misaligned with retraining, geometrically inconsistent under CKA, and elevated under $\mathrm{MIA}_{\mathrm{rep}}$. Moreover, its output-level forgetting degrades under larger datasets and models (\tab~\ref{tab:cls_cifar100_r50} and \tab~\ref{tab:cls_tinyimagenet_r18}). Thus, linear probing provides a useful sanity check, but not a complete account of retraining consistency.

\begin{figure}[t]
    \centering
    \includegraphics[width=\linewidth]{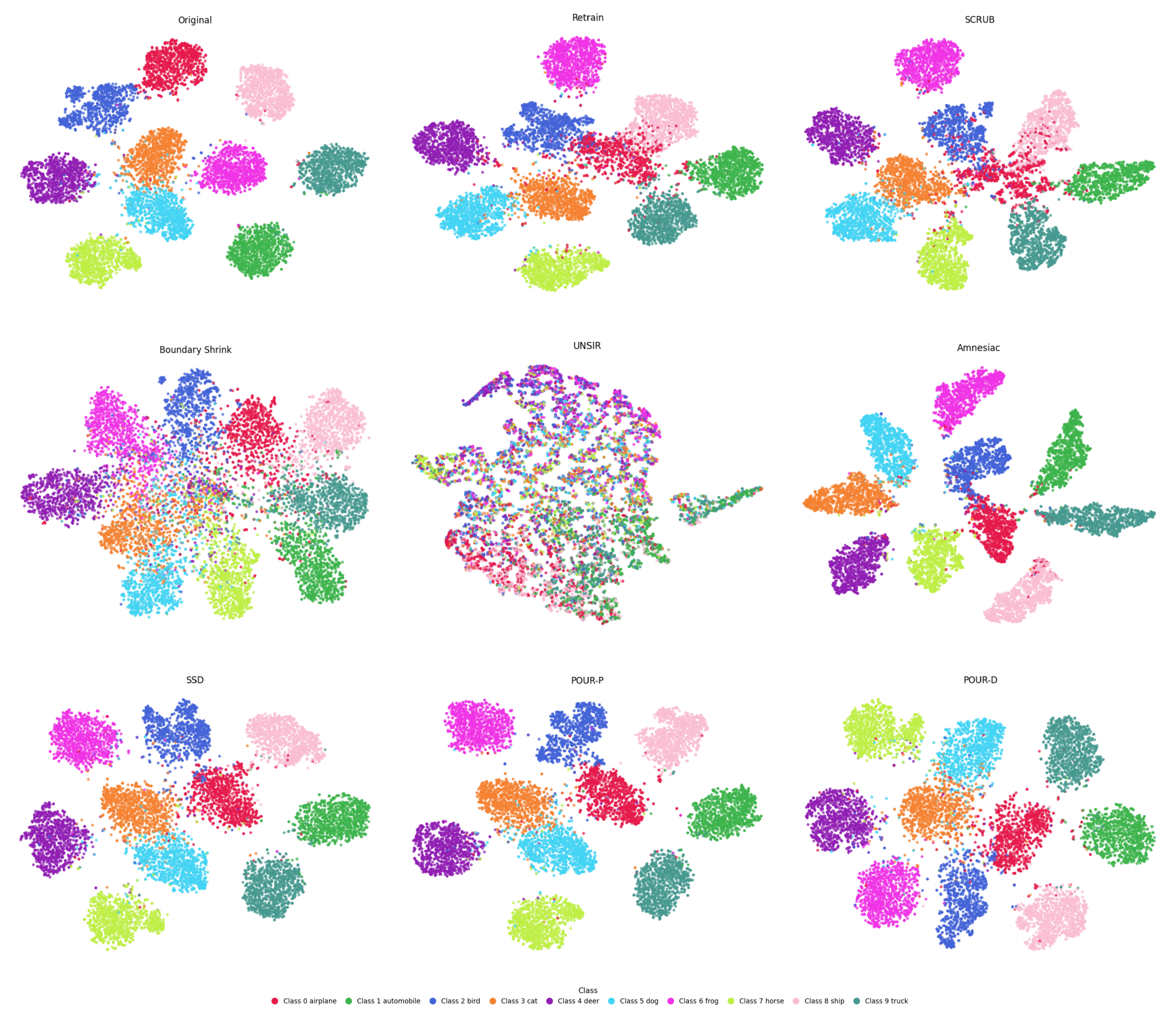}
    \caption{t-SNE visualization of feature representations on CIFAR-10 with ResNet-18. The forget class is Class 0 (airplane), shown in red.}
    \label{fig:tsne_merged}
    \vspace{-10pt}
\end{figure}

\subsection{t-SNE Visualization}

We next provide a qualitative sanity check through t-SNE visualization~\cite{van2008visualizing}. \fig~\ref{fig:tsne_merged} visualizes the learned representations of $\mathcal{D}_u$ and $\mathcal{D}_r$ for each model. We stress that t-SNE is used here only as an illustrative complement to the quantitative diagnostics above; it is not a formal basis for the paper's claims.

The visual patterns broadly agree with the linear-probe results. Amnesiac, SSD, POUR-P, and POUR-D retain a visually distinct forget-class cluster that remains separated from the retain-class representations. This resembles the original model more than the retrained model and is consistent with their high $\mathrm{Acc}_u^{\mathrm{probe}}$, indicating that forget-class geometry remains substantially intact despite complete output-level forgetting.

Boundary Shrink and UNSIR show a different behavior. Their forget-class cluster becomes diffuse, but retain-class clusters are also visibly distorted relative to the original model. This aligns with their degraded $\mathrm{Acc}_r^{\mathrm{probe}}$ and $\mathrm{Acc}_r$, suggesting broad representational disruption rather than selective forgetting.

SCRUB appears visually closest to retraining, as its forget cluster is diffuse while its retain clusters remain relatively well separated. This agrees with its near-retraining linear-probe behavior. However, as shown in the main paper, this visual similarity does not extend to directional alignment, CKA geometry, or representation-level leakage, and it does not persist under increased dataset complexity or model size.

In summary, linear probing and t-SNE support the same qualitative conclusion as the main paper. Even under simpler and more intuitive representation-space diagnostics, existing unlearning methods do not faithfully reproduce the representation-level behavior of retraining, and output-level forgetting alone remains too weak to reveal this gap.

\section{Additional Directional Diagnostics}
\label{appendix:shift_mag_ratio}

The main paper characterizes forget/retain asymmetry through the directional alignment between unlearning and retraining shifts. This appendix adds a complementary diagnostic for the CIFAR-10 / ResNet-18 baseline by comparing not only the direction, but also the magnitude of those shifts relative to retraining. The goal is to test whether apparent directional agreement is accompanied by retraining-like adjustment strength, or whether unlearning under- or over-adjusts even when the direction appears partially aligned.

\subsection{Shift Magnitude Ratios}

For each unlearning method, we compute the relative shift magnitude
$R^S =
\frac{\|\Delta_{\theta_o\rightarrow\theta_u}^S\|}{\|\Delta_{\theta_o\rightarrow\theta_r}^S\|},
\qquad S\in\{\mathcal{D}_u,\mathcal{D}_r\},$
separately for forget samples $\mathcal{D}_u$ and retain samples $\mathcal{D}_r$. A value of $R^S=1$ indicates that unlearning matches retraining in shift magnitude, $R^S>1$ indicates over-adjustment, and $R^S<1$ indicates under-adjustment.

\begin{table}[ht]
\centering
\small
\setlength{\tabcolsep}{4pt}
\renewcommand{\arraystretch}{1.05}
\begin{tabular}{l|cc|cc}
\toprule
\multirow{2}{*}{\textbf{Method}} 
& \multicolumn{2}{c|}{\textbf{Forget Samples } ($\mathcal{D}_u$)} 
& \multicolumn{2}{c}{\textbf{Retain Samples } ($\mathcal{D}_r$)} \\
\cmidrule(lr){2-3} \cmidrule(lr){4-5}
& $\cos(\Delta_u)\uparrow$ & $R^u$ 
& $\cos(\Delta_r)\uparrow$ & $R^r$ \\
\midrule
SCRUB~\cite{kurmanji2023towards}         & 0.945 & 1.074 & -0.345 & 4.578 \\
Boundary Shrink~\cite{chen2023boundary}  & 0.916 & 1.170 & -0.305 & 4.606 \\
UNSIR~\cite{tarun2023fast}              & 0.720 & 1.741 &  0.206 & 7.282 \\
Amnesiac~\cite{graves2021amnesiac}       & 0.795 & 1.612 &  0.432 & 1.776 \\
SSD~\cite{foster2024fast}                 & 0.911 & 1.312 & -0.389 & 2.608 \\
POUR-P~\cite{le2025pour}                 & 0.854 & 1.092 &  0.468 & 0.761 \\
POUR-D~\cite{le2025pour}                 & 0.884 & 1.097 &  0.074 & 1.191 \\
\bottomrule
\end{tabular}
\caption{Directional alignment and relative shift magnitude of unlearning with respect to retraining on CIFAR-10 with ResNet-18. Here $R^u$ and $R^r$ denote the ratio of unlearning shift magnitude to retraining shift magnitude on forget and retain samples, respectively.}
\label{tab:magnitude_diagnostic}
\end{table}

Table~\ref{tab:magnitude_diagnostic} refines the main paper's directional diagnosis. On forget samples $\mathcal{D}_u$, methods such as SCRUB, Boundary Shrink, SSD, and the POUR variants show high cosine similarity together with magnitude ratios close to $1$, indicating that their forget-side shifts are at least roughly similar to retraining in both direction and scale. In contrast, UNSIR and Amnesiac show weaker directional agreement together with substantially larger magnitude ratios, suggesting that they over-adjust even on the forget set.

On retain samples $\mathcal{D}_r$, the picture changes sharply. Most methods have magnitude ratios above $1$, often far above $1$, while cosine similarity is near zero or even negative. This indicates that retain-side behavior is not merely misdirected relative to retraining, but frequently over-displaced as well. In other words, the retain-side mismatch identified in the main paper reflects both directional inconsistency and excessive adjustment strength.

As a summary, these magnitude comparisons support the same qualitative conclusion as the main paper. The asymmetry between forget and retain samples is not only directional. It also appears in the scale of the representation shift, reinforcing the view that current unlearning methods only partially approximate retraining and often do so unevenly across the forget and retain sets.

\section{Extended Projection / Subspace Analysis}
\label{appendix:subspace_analysis}

The main paper shows that residual leakage and representation mismatch are concentrated along the retraining shift direction rather than being diffusely distributed across representation space. This appendix provides additional views of the same projection decomposition. The goal is not to introduce a new analysis, but to verify that the main structured-residual diagnosis remains visible under complementary summaries of the parallel and orthogonal components.

\begin{figure}[t]
    \centering
    \begin{minipage}{0.48\textwidth}
        \centering
        \begin{subfigure}{0.48\textwidth}
            \centering
            \includegraphics[width=\textwidth]{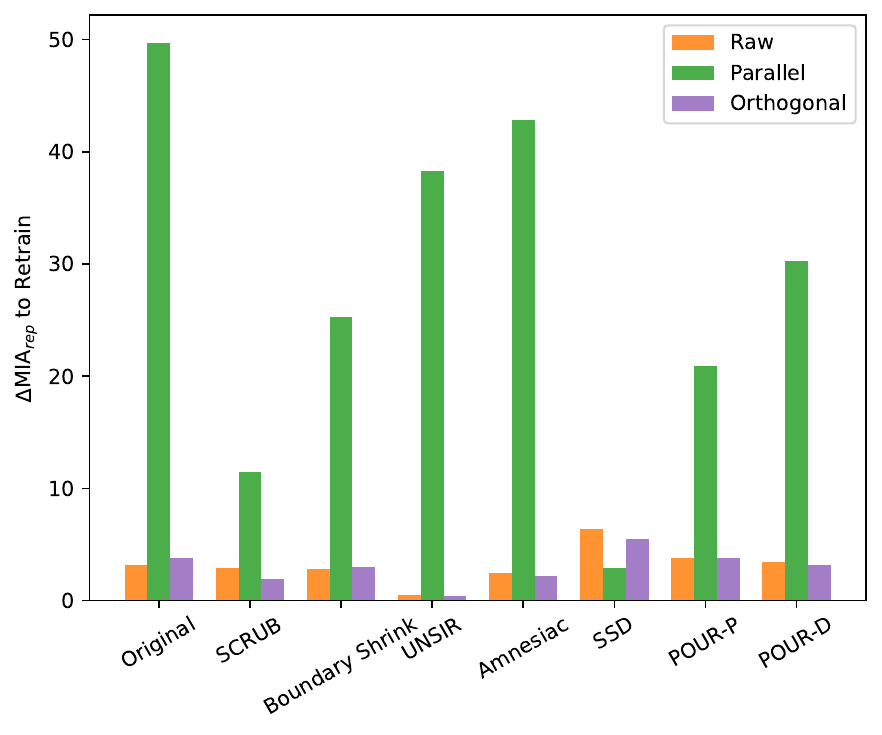}
            \captionsetup{justification=centering}
            \caption{$\Delta\mathrm{MIA}_{\mathrm{rep}}$}
            \label{fig:mia_raw_subspace}
        \end{subfigure}
        \hfill
        \begin{subfigure}{0.48\textwidth}
            \centering
            \includegraphics[width=\textwidth]{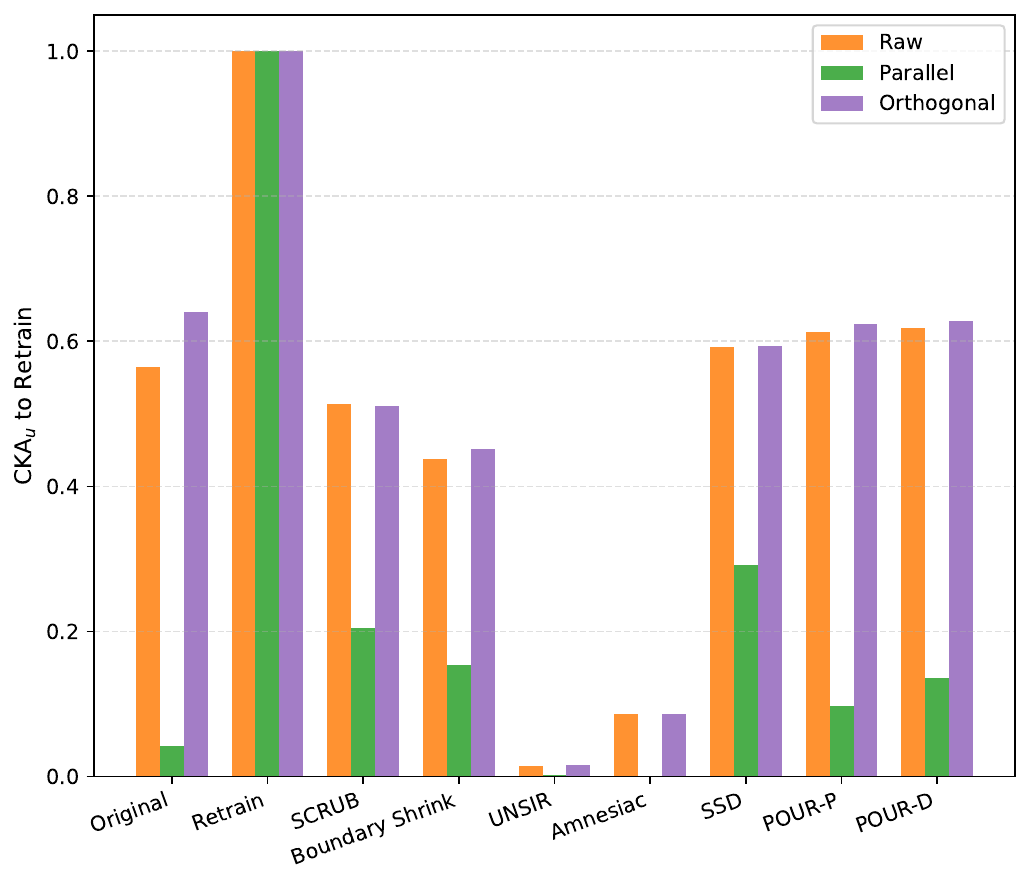}
            \captionsetup{justification=centering}
            \caption{$\mathrm{CKA}_u$}
            \label{fig:ckau_raw_subspace}
        \end{subfigure}
        \caption{Residual discrepancy of the raw representation together with its parallel and orthogonal components.}
        \label{fig:mia_cka_raw_subspace}
    \end{minipage}
    \hfill
    \nextfloat
    \begin{minipage}{0.48\textwidth}
        \centering
        \begin{subfigure}{0.48\textwidth}
            \centering
            \includegraphics[width=\textwidth]{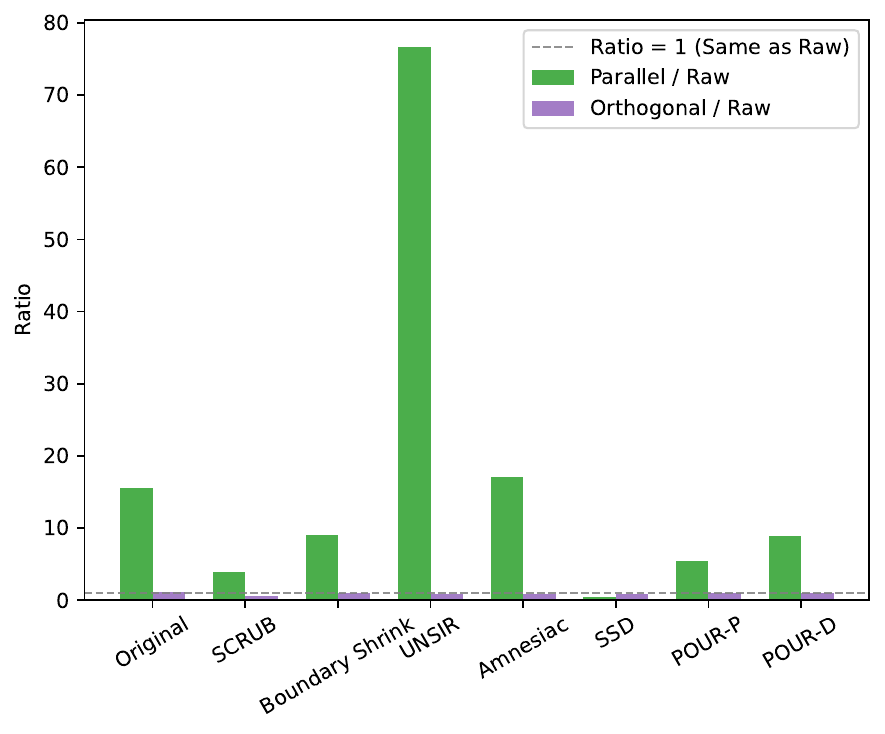}
            \captionsetup{justification=centering}
            \caption{$\Delta\mathrm{MIA}_{\mathrm{rep}}$}
            \label{fig:mia_ratio_subspace}
        \end{subfigure}
        \hfill
        \begin{subfigure}{0.48\textwidth}
            \centering
            \includegraphics[width=\textwidth]{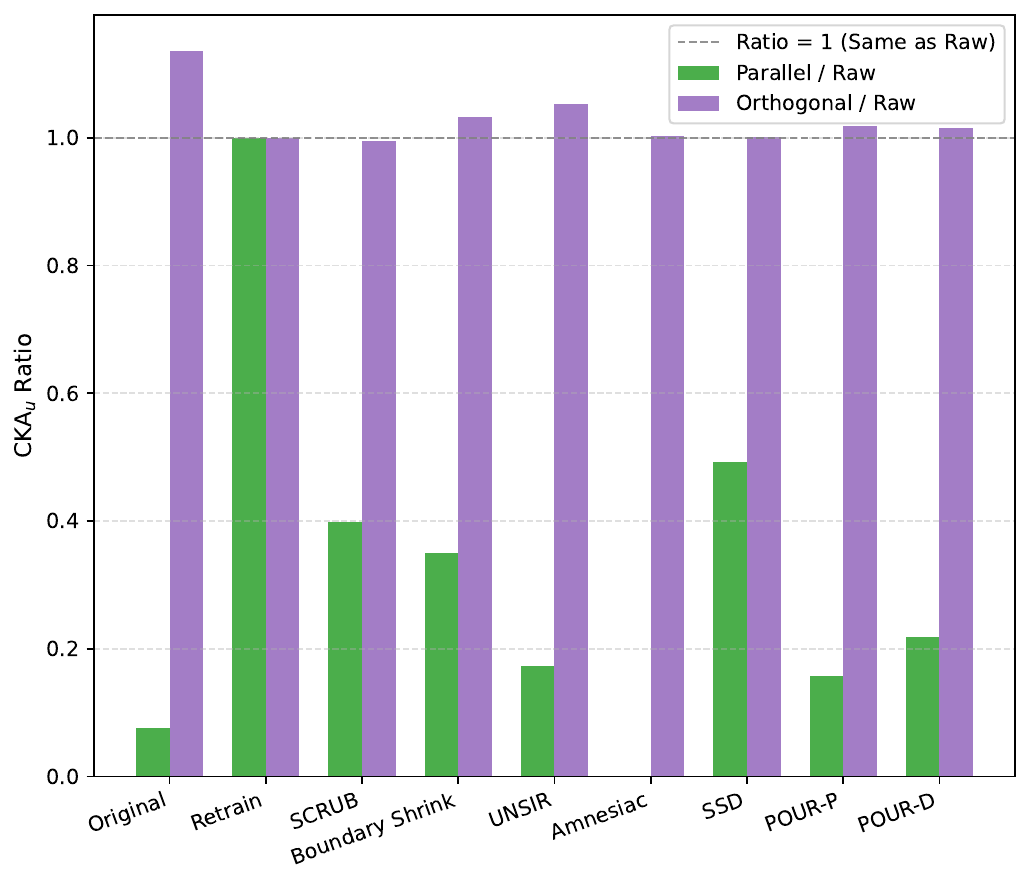}
            \captionsetup{justification=centering}
            \caption{$\mathrm{CKA}_u$}
            \label{fig:cka_f_ratio_subspace}
        \end{subfigure}
        \caption{Residual discrepancy ratio relative to the raw representation, showing how discrepancy is distributed across the parallel and orthogonal components.}
        \label{fig:mia_cka_u_ratio_subspace}
    \end{minipage}
    \vspace{-10pt}
\end{figure}

\subsection{Raw, Parallel, and Orthogonal Comparisons}

\fig~\ref{fig:mia_cka_raw_subspace} compares the raw representation with its parallel and orthogonal components for both $\Delta\mathrm{MIA}_{\mathrm{rep}}$ and $\mathrm{CKA}_u$. Higher $\Delta\mathrm{MIA}_{\mathrm{rep}}$ and lower $\mathrm{CKA}_u$ indicate larger discrepancy relative to retraining.

The same qualitative pattern appears across methods. The orthogonal component typically remains close to the raw representation, whereas the parallel component shows stronger deviation. In the original model, the parallel component already carries a substantially larger discrepancy than the orthogonal component. After unlearning, this imbalance persists. Although the absolute discrepancy often decreases relative to the original model, the parallel component remains the more informative direction of residual mismatch. This supports the main paper’s claim that the deviation from retraining is not uniformly distributed across representation space, but organized along the retraining-related direction.

\subsection{Parallel-to-Raw and Orthogonal-to-Raw Ratios}

To quantify how much of the residual discrepancy is concentrated in each component, we normalize each subspace metric by its raw counterpart. For both $\Delta\mathrm{MIA}_{\mathrm{rep}}$ and $\mathrm{CKA}_u$, we compute
\[
r_{\parallel}=\frac{\mathrm{parallel}}{\mathrm{raw}},
\qquad
r_{\perp}=\frac{\mathrm{orthogonal}}{\mathrm{raw}}.
\]
A ratio near $1$ indicates that the subspace carries discrepancy comparable to the raw representation. For $\Delta\mathrm{MIA}_{\mathrm{rep}}$, values above $1$ indicate excess leakage relative to the raw representation. For $\mathrm{CKA}_u$, values below $1$ indicate stronger mismatch than the raw representation.

\fig~\ref{fig:mia_cka_u_ratio_subspace} shows a stable asymmetry across methods. The orthogonal ratios remain close to $1$, indicating that the orthogonal component largely tracks the raw discrepancy. In contrast, the parallel ratios consistently point to stronger discrepancy, with $r_{\parallel}$ typically exceeding $1$ for $\Delta\mathrm{MIA}_{\mathrm{rep}}$ and falling below $1$ for $\mathrm{CKA}_u$. This again indicates that the retraining direction carries disproportionately large residual discrepancy.

\subsection{Additional Scatter Plots}

\fig~\ref{fig:scatter_subspace} provides a final visual summary of the same decomposition. For $\Delta\mathrm{MIA}_{\mathrm{rep}}$, we plot ${\Delta\mathrm{MIA}_{\mathrm{rep}}}_{\parallel}$ against ${\Delta\mathrm{MIA}_{\mathrm{rep}}}_{\perp}$ for each method. For $\mathrm{CKA}_u$, we analogously plot ${\mathrm{CKA}_u}_{\parallel}$ against ${\mathrm{CKA}_u}_{\perp}$. The diagonal corresponds to equal discrepancy in the two components.

If the residual were diffuse, points would concentrate near the diagonal. Instead, the points systematically fall away from it. For $\Delta\mathrm{MIA}_{\mathrm{rep}}$, most methods lie in the region where the parallel component shows stronger leakage than the orthogonal component. For $\mathrm{CKA}_u$, most methods occupy the complementary regime in which the parallel component exhibits lower similarity to retraining than the orthogonal component. Across diverse methods, this off-diagonal structure reinforces the main paper's conclusion that residual discrepancy is organized, not diffuse, and remains concentrated along retraining-related directions.

\begin{figure}[t]
    \centering
    \begin{minipage}{\textwidth}
        \centering
        \begin{subfigure}{0.48\textwidth}
            \centering
            \includegraphics[width=\textwidth]{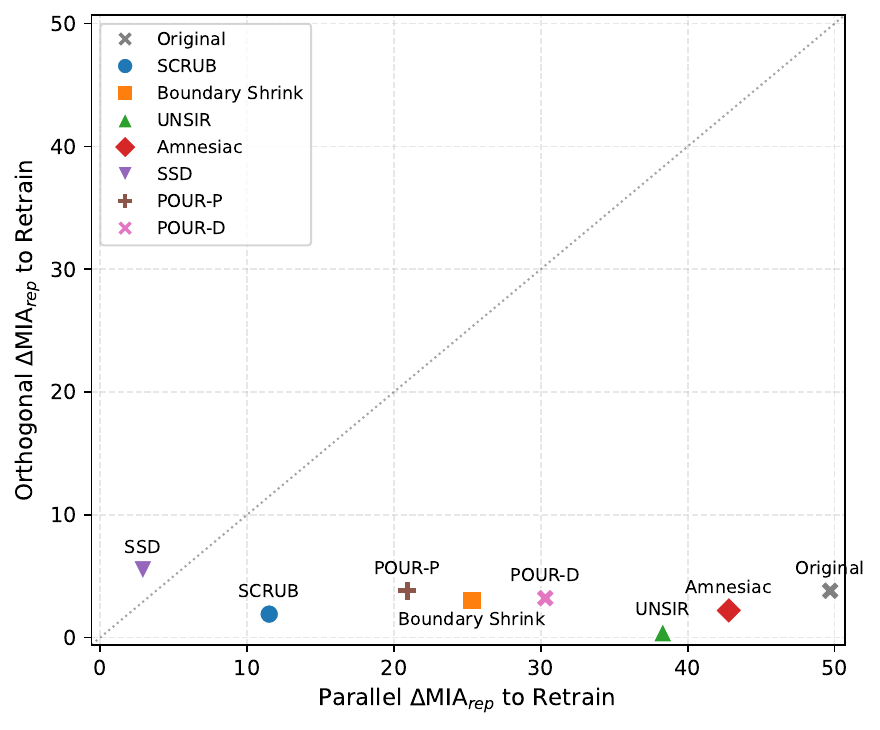}
            \captionsetup{justification=centering}
            \caption{$\Delta\mathrm{MIA}_{\mathrm{rep}}$}
            \label{fig:mia_scatter_subspace}
        \end{subfigure}
        \hfill
        \begin{subfigure}{0.48\textwidth}
            \centering
            \includegraphics[width=\textwidth]{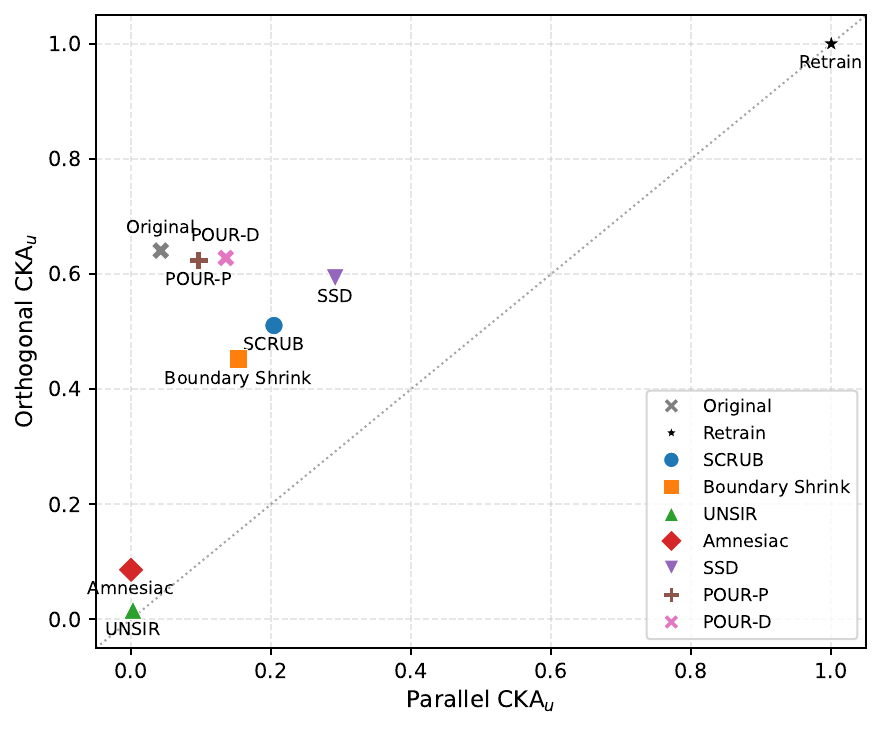}
            \captionsetup{justification=centering}
            \caption{$\mathrm{CKA}_u$}
            \label{fig:cka_f_scatter_subspace}
        \end{subfigure}
        \caption{Scatter plots of residual discrepancy across the parallel and orthogonal components.}
        \label{fig:scatter_subspace}
    \end{minipage}
\end{figure}

\section{Extended Scaling Summaries}
\label{appendix:extended_scaling}

The main paper argues that the diagnosed mismatch is not a narrow artifact of one benchmark configuration, but a stable property of current unlearning behavior. This appendix provides additional output-level and directional summaries across dataset complexity, model size, and architecture. The purpose is not to expand the paper into a benchmark comparison, but to show that the main persistence claim continues to hold across a broader range of experiments.

\subsection{Additional Output-level Accuracy Tables}

We first report output-level accuracy across four additional settings to rule out the weaker explanation that the representation-level failures identified in the main paper are simply caused by poor output-level forgetting. For each setting, we report forget-set accuracy $\mathrm{Acc}_u$, retain-set accuracy $\mathrm{Acc}_r$, and the harmonic mean
$\mathrm{HM}
=
\frac{2\cdot \mathrm{Acc}_r \cdot (100-\mathrm{Acc}_u)}
{\mathrm{Acc}_r + 100-\mathrm{Acc}_u},$
with the original and retrained models serving as lower and upper references, respectively. Bold indicates the top three $\mathrm{HM}$ values among unlearning methods. All metrics are reported in $\%$.

\begin{figure}[t]
    \centering
    \begin{minipage}{0.45\textwidth}
        \centering
        \captionsetup{type=table}
        {\small
        \begin{tabular}{l|ccc}
        \toprule
        \textbf{Method} & $\mathrm{Acc}_u$ $\downarrow$ & $\mathrm{Acc}_r$ $\uparrow$ & $\mathrm{HM}$ $\uparrow$ \\
        \midrule
        Original & 99.41 & 96.87 & 1.17 \\
        Retrain  & 0.00  & 95.91 & 97.91 \\
        \midrule
        SCRUB     & 0.39  & 99.20 & \textbf{99.40} \\
        Boundary Shrink  & 1.41  & 49.77 & 66.15 \\
        UNSIR     & 0.00  & 21.33 & 35.16 \\
        Amnesiac  & 0.00  & 91.18 & \textbf{95.39} \\
        SSD       & 0.00  & 64.11 & 78.13 \\
        POUR-P    & 0.00  & 96.87 & \textbf{98.41} \\
        POUR-D    & 0.00  & 65.57 & 79.21 \\
        \bottomrule
        \end{tabular}
        }
        \caption{Output-level accuracy for CIFAR-100 / ResNet-18.\\[\baselineskip]}
        \label{tab:cls_cifar100_r18}
    \end{minipage}
    \hfill
    \begin{minipage}{0.45\textwidth}
        \centering
        \captionsetup{type=table}
        {\small
        \begin{tabular}{l|ccc}
        \toprule
        \textbf{Method} & $\mathrm{Acc}_u$ $\downarrow$ & $\mathrm{Acc}_r$ $\uparrow$ & $\mathrm{HM}$ $\uparrow$ \\
        \midrule
        Original & 96.79 & 91.99 & 6.20 \\
        Retrain  & 0.00  & 93.57 & 96.68 \\
        \midrule
        SCRUB & 67.77 & 98.28 & 48.54 \\
        Boundary Shrink  & 0.22  & 44.55 & 61.60 \\
        UNSIR     & 0.00  & 19.69 & 32.90 \\
        Amnesiac  & 0.00  & 90.85 & \textbf{95.21} \\
        SSD       & 0.00  & 68.62 & 81.39 \\
        POUR-P    & 0.00  & 91.65 & \textbf{95.64} \\
        POUR-D    & 0.00  & 78.88 & \textbf{88.19} \\
        \bottomrule
        \end{tabular}
        }
        \caption{Output-level accuracy for CIFAR-100 / ResNet-50. SCRUB fails to complete forgetting, with $\mathrm{Acc}_u = 67.77\%$ and $\mathrm{HM} = 48.54\%$.}
        \label{tab:cls_cifar100_r50}
    \end{minipage}
\end{figure}
\begin{figure}[t]
    \centering
    \begin{minipage}{0.45\textwidth}
        \centering
        \captionsetup{type=table}
        {\small
        \begin{tabular}{l|ccc}
        \toprule
        \textbf{Method} & $\mathrm{Acc}_u$ $\downarrow$ & $\mathrm{Acc}_r$ $\uparrow$ & $\mathrm{HM}$ $\uparrow$ \\
        \midrule
        Original & 92.22 & 74.63 & 14.09 \\
        Retrain  & 0.00  & 78.07 & 87.68 \\
        \midrule
        SCRUB     & 26.50 & 79.07 & \textbf{76.18} \\
        Boundary Shrink  & 10.04 & 58.23 & 70.70 \\
        UNSIR     & 0.00  & 53.44 & 69.66 \\
        Amnesiac  & 1.00  & 76.48 & \textbf{86.29} \\
        SSD       & 0.00  & 22.91 & 37.28 \\
        POUR-P    & 0.00  & 74.60 & \textbf{85.45} \\
        POUR-D    & 0.40  & 63.07 & 77.23 \\
        \bottomrule
        \end{tabular}
        }
        \caption{Output-level accuracy for TinyImageNet / ResNet-18.}
        \label{tab:cls_tinyimagenet_r18}
    \end{minipage}
    \hfill
    \begin{minipage}{0.45\textwidth}
        \centering
        \captionsetup{type=table}
        {\small
        \begin{tabular}{l|ccc}
        \toprule
        \textbf{Method} & $\mathrm{Acc}_u$ $\downarrow$ & $\mathrm{Acc}_r$ $\uparrow$ & $\mathrm{HM}$ $\uparrow$ \\
        \midrule
        Original & 99.80 & 99.95 & 0.39 \\
        Retrain  & 0.00  & 99.95 & 99.97 \\
        \midrule
        SCRUB     & 0.00  & 99.94 & \textbf{99.97} \\
        Boundary Shrink  & 9.81  & 87.44 & 88.79 \\
        UNSIR     & 0.00  & 39.92 & 57.06 \\
        Amnesiac  & 0.00  & 84.15 & 91.39 \\
        SSD       & 0.00  & 80.54 & 89.22 \\
        POUR-P    & 0.00  & 99.95 & \textbf{99.98} \\
        POUR-D    & 0.21  & 98.77 & \textbf{99.28} \\
        \bottomrule
        \end{tabular}
        }
        \caption{Output-level accuracy for CIFAR-100 / ViT-Tiny.}
        \label{tab:cls_cifar100_vit}
    \end{minipage}
\end{figure}

Taken together, these tables support the main paper's interpretation rather than weaken it. In most settings, several methods still achieve strong output-level forgetting and high harmonic mean, so the representation-level failures discussed in the paper cannot be dismissed as a trivial byproduct of poor output behavior. This is especially true for methods such as Amnesiac and POUR-P, which often remain strong under output-level evaluation while still exhibiting directional and geometric mismatch relative to retraining. At the same time, scaling makes some failures more visible, with SCRUB no longer forgetting effectively on CIFAR-100 with ResNet-50 (\tab~\ref{tab:cls_cifar100_r50}) and on TinyImageNet with ResNet-18 (\tab~\ref{tab:cls_tinyimagenet_r18}), while Boundary Shrink and UNSIR continue to suffer severe retain-side degradation (\tab~\ref{tab:cls_cifar100_r18},~\ref{tab:cls_cifar100_r50},~\ref{tab:cls_cifar100_vit}) . Thus, broader output-level summaries do not overturn the main diagnosis. They reinforce the central claim that strong output-level behavior can coexist with retraining-inconsistent representation-level structure.

\subsection{Additional Directional Alignment Plots}

We next report directional alignment across the same four settings to test whether the forget/retain asymmetry identified in the main paper persists beyond the CIFAR-10 / ResNet-18 baseline. For each method, we plot the cosine similarity between the unlearning and retraining shifts, computed separately for forget and retain samples. Values closer to $1$ indicate stronger directional agreement with retraining, values near $0$ indicate orthogonality, and negative values indicate opposing directions.

\begin{figure}[t]
    \centering
    \begin{minipage}[t]{0.48\textwidth}
        \centering
        \includegraphics[width=\textwidth]{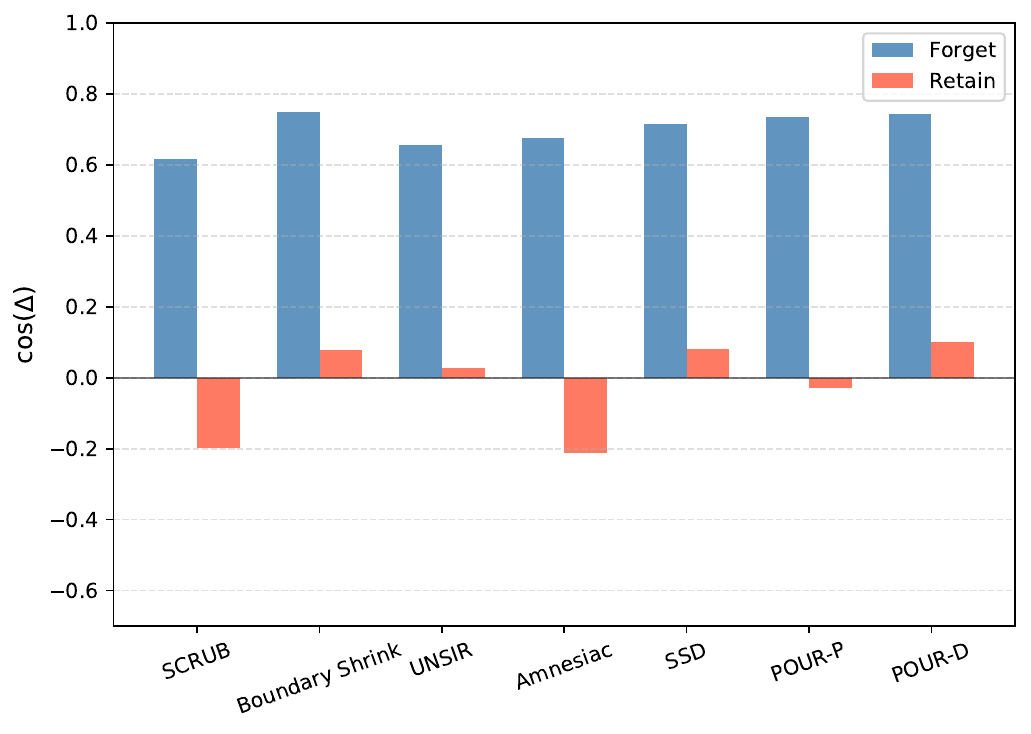}
        \captionsetup{type=figure}
        \caption{Directional alignment for CIFAR-100 / ResNet-18.}
        \label{fig:dir_cf100_rn18}
    \end{minipage}
    \hfill
    \begin{minipage}[t]{0.48\textwidth}
        \centering
        \includegraphics[width=\textwidth]{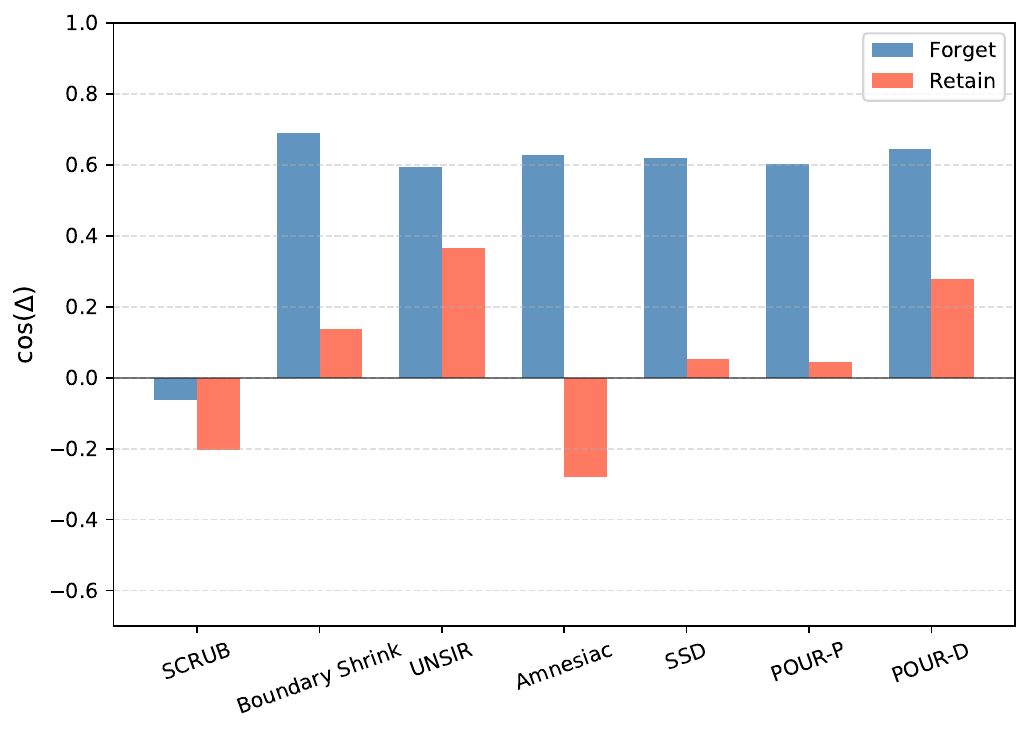}
        \captionsetup{type=figure}
        \caption{Directional alignment for CIFAR-100 / ResNet-50. SCRUB shows negative alignment on both forget and retain samples, indicating that its representation shift moves in the opposite direction from retraining.}
        \label{fig:dir_cf100_rn50}
    \end{minipage}
\end{figure}
\begin{figure}[t]
    \centering
    \begin{minipage}[t]{0.48\textwidth}
        \centering
        \includegraphics[width=\textwidth]{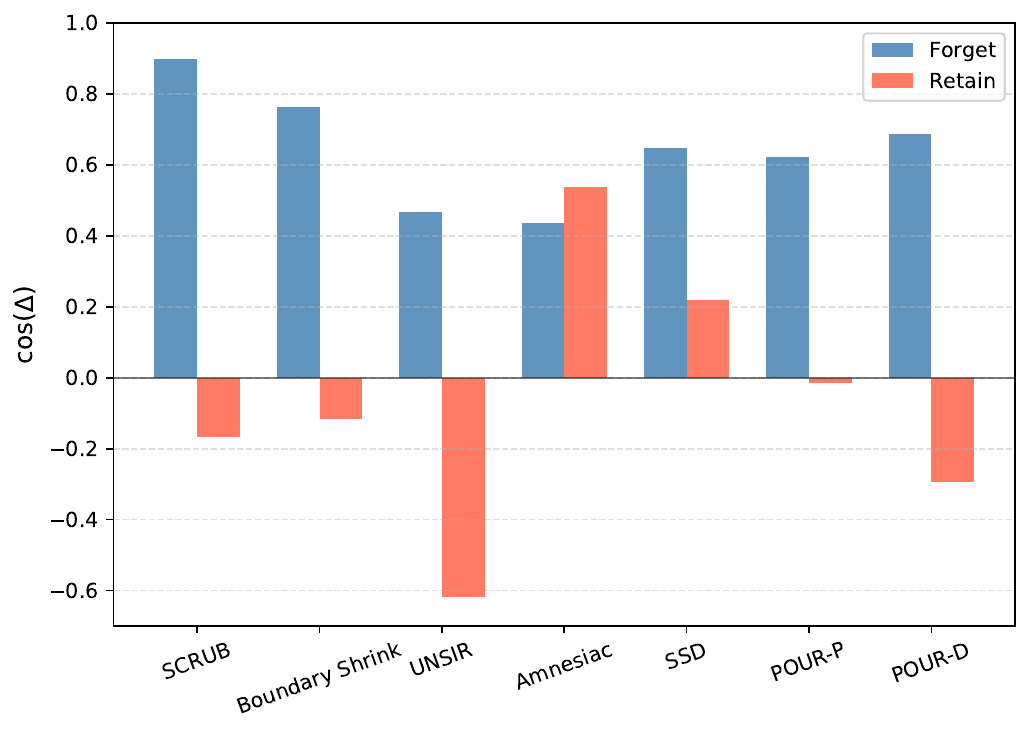}
        \captionsetup{type=figure}
        \caption{Directional alignment for TinyImageNet / ResNet-18. Amnesiac shows slightly higher alignment on retain samples than on forget samples in this setting, deviating from the more common pattern observed elsewhere.}
        \label{fig:dir_tin_rn18}
    \end{minipage}
    \hfill
    \begin{minipage}[t]{0.48\textwidth}
        \centering
        \includegraphics[width=\textwidth]{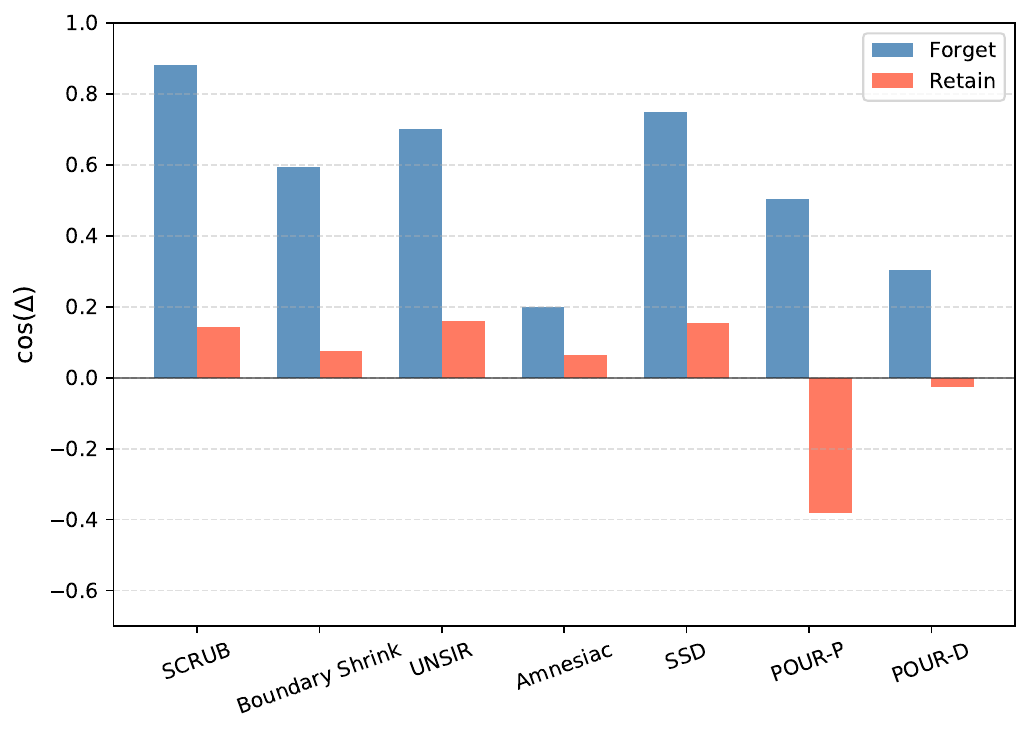}
        \captionsetup{type=figure}
        \caption{Directional alignment for CIFAR-100 / ViT-Tiny.}
        \label{fig:dir_cf100_vit}
    \end{minipage}
\end{figure}

Across these settings, the same qualitative asymmetry persists. Forget-side alignment is typically moderate to high, whereas retain-side alignment often remains near zero or below, especially in \fig~\ref{fig:dir_cf100_rn18} and \ref{fig:dir_cf100_vit}. This shows that the main paper's directional diagnosis is not tied to a single benchmark or backbone. The two clearest exceptions, namely Amnesiac on TinyImageNet (\fig~\ref{fig:dir_tin_rn18}) and SCRUB on CIFAR-100 with ResNet-50 (\fig~\ref{fig:dir_cf100_rn50}), do not weaken the argument; they represent stricter failures in which the method does not align with retraining even on forget samples. Overall, these additional plots reinforce the main paper's conclusion that current unlearning methods may partially align forget representations with retraining, yet remain persistently inconsistent with retain representations across increased dataset complexity, model size, and architectural change.

\section{Extended Discussion}
\label{appendix:extended_failure_mode}

This appendix expands the discussion in Section~\ref{sec:failure_mode}. It provides a fuller interpretation of the diagnosed failure mode, together with its implications for evaluation and method design.

\subsection{A Hidden Failure Mode of Current Machine Unlearning}

The central result of this paper is that current machine unlearning suffers from a hidden failure mode that standard output-level evaluation systematically fails to detect.

First, standard output-level success signals are too weak. Low forget-set accuracy, low logit-level membership inference, and strong retain accuracy can make an unlearned model appear successful even when substantial forget information remains recoverable in representation space. In this sense, current evaluation can systematically mistake \emph{apparent forgetting} for successful forgetting.

Second, this failure is \emph{structured rather than random}. Current methods are not simply wrong in arbitrary ways. Instead, a retraining-consistent representation lens reveals a repeated pattern in which these methods often partially align with retraining on forget samples, diverge on retain samples, and leave residual leakage that is concentrated rather than diffuse. The residual mismatch is therefore not well described as generic optimization error; it reflects a systematic deviation from retraining-consistent forgetting.

Taken together, these findings suggest that many current unlearning methods remain substantially inconsistent with retraining in representation space, even when they appear successful under output-level evaluation. In this sense, output-level forgetting alone can be a misleading indicator of successful unlearning under a stronger representation-level lens.

\subsection{Implications for Evaluation}

The most immediate implication of this failure mode is that current evaluation practice is incomplete. If output-level metrics alone can be satisfied while representation-level residuals remain recoverable, then prevailing evaluation protocols can systematically overestimate successful unlearning. This does not mean that output-level metrics are uninformative. They remain useful for measuring prediction-layer behavior and black-box leakage. However, they are not strong enough to serve as the sole success signal when the goal is to remove the influence of the forget set from the model.

Our results therefore support a stricter evaluation principle in which \unlearning~is assessed relative to a retrained reference rather than by output behavior alone. In particular, evaluation should ask whether the unlearned model is \emph{retraining-consistent} in representation space, and whether the residual discrepancy is random or structured. Under this lens, the key question is no longer just whether the model looks forgotten at the output layer, but whether it changes in a way that resembles retraining without the forget data.

\subsection{Implications for Method Design}

The same failure mode also has implications for algorithm design. Many current methods are optimized around output-level objectives, such as reducing forget-set confidence or weakening output-level membership signals. Our results suggest that this is not sufficient to ensure consistency with retraining in representation space. A method can satisfy these output-level objectives while still exhibiting asymmetric and structured retraining-inconsistent residuals.

This does not imply that current methods are useless, nor that all of them fail in the same way. Rather, it suggests that many methods are better understood as achieving \emph{apparent forgetting} rather than retraining-consistent forgetting. Future unlearning methods may therefore benefit from stronger representation-level principles, including reducing structured residuals in representation space and better approximating retraining-consistent transformations.

\subsection{Broader Perspective}

The broader message of this paper is that the most informative structure in unlearning is not fully visible at the output layer. Output-level metrics remain useful, but they can be too weak to reveal whether unlearning actually follows retraining in representation space. In this sense, retraining reveals what output forgetting hides by showing not only that residual discrepancy remains, but also how that discrepancy is organized.

Seen this way, the central issue is not merely that some metrics are incomplete, but that current evaluation practice can certify successful-looking unlearning on the basis of a signal that is too weak to rule out structured retraining-inconsistent residuals.

\section{Limitations and Scope}
\label{sec:alternative_views}

Our goal is not to argue that output-level metrics or retraining-based references are useless in all settings. Rather, the claim of this paper is narrower and more specific. By themselves, they are \emph{insufficient} to rule out a hidden failure mode in which apparent output-level forgetting coexists with retraining-inconsistent residuals in representation space.

\paragraph{Output-level metrics may be sufficient in practice.}
A natural counterargument is that many realistic adversaries only have black-box access to model outputs, so evaluating \unlearning~through output-level quantities such as $\mathrm{MIA}_{\mathrm{logit}}$ may be practically sufficient~\cite{kurmanji2023towards,chundawat2023can}. We agree that output-level metrics remain useful in such settings. However, this does not alter the paper's central diagnosis. A growing number of models are released with open weights or otherwise permit access to internal representations~\cite{touvron2023llama}, and white-box access is also natural in auditing and compliance settings for data deletion~\cite{sun2026statistical}. In these cases, output-level success alone is insufficient to rule out structured residual discrepancies in the representation space. Our results show that a model can appear forgotten at the output layer while still remaining substantially inconsistent with retraining internally. Thus, output-level evaluation remains informative, but it is not sufficient as a general success signal for the stronger notion of forgetting studied here.

\paragraph{Retraining may be too expensive to serve as a practical reference.}
A second objection is that retraining from scratch is often computationally infeasible in large-scale settings~\cite{spartalis2025lotus,yu2025impossibility}, making $\theta_r$ an impractical operational target. We agree that retraining is often too costly to use as a deployment-time procedure. But this is different from the role it plays in our paper. Here, $\theta_r$ is used as a \emph{benchmark-scale reference} for what stronger forgetting should look like, not as a claim that every practical system must retrain in deployment. If the goal of \unlearning~is to approximate the model that would have been obtained without the forget set, then retraining remains a natural comparative reference against which to diagnose failure. Under this interpretation, retraining is useful not because it must always be deployed, but because it reveals mismatches that weaker endpoint-based criteria can miss.

More broadly, our argument does not require retraining-consistent geometry to be the only valid notion of forgetting. We use retraining as a stronger reference because, in our controlled setting, it captures the model that would have been obtained without the forget data. The key point is therefore comparative, not absolute, because weaker output-level criteria may certify apparent forgetting even when substantial retraining-inconsistent residuals remain detectable in representation space.

As a whole, these considerations clarify the scope of our claim rather than weaken it. Output-level evaluation remains useful, and retraining is not always operationally feasible. Our point is that current machine unlearning is often evaluated using a signal that is too weak to detect the structured hidden failure mode studied in this paper.

\newpage
\onecolumn



\end{document}